\newtheorem{proposition}{Proposition}
\newtheorem{theorem}{Theorem}
\newtheorem{lemma}{Lemma}
\newtheorem{corollary}{Corollary}
\theoremstyle{definition}
\newtheorem{definition}{Definition}
\theoremstyle{remark}
\title{Defining and Characterizing Reward Hacking}
\author{%
Joar Skalse%
\thanks{Equal contribution. Correspondence to: \texttt{\href{mailto:joar.mvs@gmail.com}{joar.mvs@gmail.com}, \href{mailto:david.scott.krueger@gmail.com}{david.scott.krueger@gmail.com}}}
\\
University of Oxford\\
\And
Nikolaus H.\ R.\ Howe \\
Mila, Universit\'e de Montr\'eal\\
\And
Dmitrii Krasheninnikov \\ 
University of Cambridge\\
\And
David Krueger%
\footnotemark[1] \\
University of Cambridge\\
}
\newcommand{\IconBedroom}[1][1.5ex]{\includegraphics[height=#1]{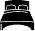}\xspace}
\newcommand{\IconKitchen}[1][1.5ex]{\includegraphics[height=#1]{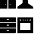}\xspace}
\newcommand{\IconAttic}[1][1.5ex]{\includegraphics[height=#1]{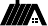}\xspace}
\begin{document}

\maketitle

\begin{abstract}
We provide the first formal definition of \textbf{reward hacking}, a phenomenon where optimizing an imperfect proxy reward function, $\mathcal{\tilde{R}}$, leads to poor performance according to the true reward function, $\mathcal{R}$.  
We say that a proxy is \textbf{unhackable} if increasing the expected proxy return can never decrease the expected true return.
Intuitively, it might 
be possible
to create an unhackable proxy by leaving some terms out of the reward function (making it ``narrower'') or overlooking fine-grained distinctions between roughly equivalent outcomes,
but we show this is usually not the case.
A key insight is that the linearity of reward (in state-action visit counts) makes unhackability a very strong condition. 
In particular, for the set~of~all stochastic policies, two reward functions can only be unhackable if one of them is constant.
We thus turn our attention to deterministic policies and finite sets of stochastic policies, where non-trivial unhackable pairs always exist, and establish necessary and sufficient conditions for the existence of 
simplifications, an important special case of unhackability.
Our results reveal a tension between using reward functions to specify narrow tasks and aligning AI systems with human values.
\end{abstract}

\section{Introduction}
\setcounter{footnote}{0}

It is well known that optimising a proxy can lead to unintended outcomes: 
a boat spins in circles collecting ``powerups'' instead of following the race track in a racing game \citep{clark2016faulty}; 
an evolved circuit listens in on radio signals from nearby computers' oscillators instead of building its own \citep{bird2002evolved}; universities reject the most qualified applicants in order to appear more selective and boost their ratings \citep{golden2001glass}.
In the context of reinforcement learning (RL), such failures are called \textbf{reward hacking}.

For AI systems that take actions in safety-critical real world environments such as autonomous vehicles,
algorithmic trading,
or content recommendation systems, %
these unintended outcomes can be catastrophic.
This makes it crucial to align autonomous AI systems with their users' intentions.
Precisely specifying which behaviours are or are not desirable is challenging, however.
One approach to this specification problem is to learn an approximation of the true reward function \citep{ng2000algorithms, ziebart2010modeling, leike2018scalable}.
Optimizing a learned proxy reward can be dangerous, however;
for instance, it might overlook side-effects \citep{Krakovna2018Penalizing, Turner2019Conservative} or encourage power-seeking \citep{turner2021optimalneurips} behavior.
This raises the question motivating our work: When is it safe to optimise a proxy? 

To begin to answer this question, we consider a somewhat simpler one: When \textit{could} optimising a proxy lead to worse behaviour? 
\enquote{Optimising}, in this context, does not refer to finding a global, or even local, optimum, but rather running a search process, such as stochastic gradient descent (SGD), that yields a sequence of candidate policies, and tends to move towards policies with higher (proxy) reward. 
We make no assumptions about the path through policy space that optimisation takes.\footnote{
 This assumption -- although conservative -- is reasonable because optimisation in state-of-the-art deep RL methods is poorly understood and results are often highly stochastic and suboptimal.
}
Instead, we ask whether there is \textit{any} way in which improving a policy according to the proxy could make the policy worse according to the true reward; this is equivalent to asking if there exists a pair of policies $\pi_1$, $\pi_2$ where the proxy prefers $\pi_1$, but the true reward function prefers $\pi_2$.
When this is the case, we refer to this pair of true reward function and proxy reward function as \textbf{hackable}.

Given the strictness of our definition, it is not immediately apparent that any non-trivial examples of unhackable reward function pairs exist. 
And indeed, if we consider the set of all stochastic policies, they do not (Section~\ref{sec:results_all}).
However, restricting ourselves to \textit{any} finite set of policies guarantees at least one non-trivial unhackable pair (Section~\ref{sec:results_finite}).

Intuitively, we might expect the proxy to be a ``simpler'' %
version of the true reward function.  
Noting that the definition of unhackability is symmetric, we introduce the asymmetric special case of \textbf{simplification}, and arrive at similar theoretical results for this notion.\footnote{See Section~\ref{sec:our_ definitions} for formal definitions.}
In the process, and through examples, we show that seemingly natural ways of simplifying reward functions often fail to produce simplifications in our formal sense, and %
in fact fail to
rule out the potential for reward hacking.

We conclude with a discussion of the implications and limitations of our work.
Briefly, our work suggests that a proxy reward function must satisfy demanding standards in order for it to be safe to optimize. 
This in turn implies that the reward functions learned by methods such as reward modeling and inverse RL are perhaps best viewed as auxiliaries to policy learning, rather than specifications that should be optimized.
This conclusion is weakened, however, by the conservativeness of our chosen definitions; future work should explore when hackable proxies can be shown to be safe in a probabilistic or approximate sense, or when subject to only limited optimization.

\section{Example: Cleaning Robot}
Consider a household robot tasked with cleaning a house with three rooms: Attic \IconAttic, Bedroom \IconBedroom, and Kitchen \IconKitchen.
The robot's (deterministic) policy is a vector indicating which rooms it cleans: 
$\pi = [\pi_1, \pi_2, \pi_3] \in \{0, 1\}^3$.
The robot receives a (non-negative) reward of $r_1, r_2, r_3$ for cleaning the attic, bedroom, and kitchen, respectively, and the total reward is given by $J(\pi) = \pi \cdot r$.
For example, if $r = [1, 2, 3]$ and the robot cleans the attic and the kitchen, it receives a reward of $1+3 = 4$.

\begin{figure*}[h!]
\vspace{-3pt}
\centering
\includegraphics[width=0.75\textwidth]{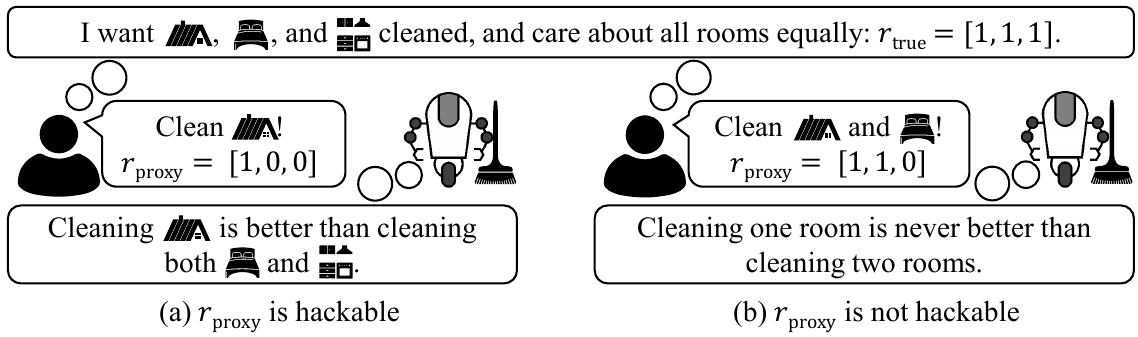}
\caption{An illustration of hackable and unhackable proxy rewards arising from overlooking rewarding features. A human wants their house cleaned.
In (a), the robot draws an incorrect conclusion because of the proxy; this could lead to hacking. In (b), no such hacking can occur: the proxy is unhackable.}
\label{fig:front-fig}
\end{figure*}

\vspace{-4pt}
At least two ideas come to mind when thinking about \enquote{simplifying} a reward function.
The first one is \textit{overlooking rewarding features}: suppose the true reward
is equal for all the rooms, $r_\text{true} = [1, 1, 1]$,
but we only ask the robot to clean the attic and bedroom, $r_\text{proxy} = [1, 1, 0]$. 
In this case, $r_\text{proxy}$ and $r_\text{true}$ are unhackable.
However, if we ask the robot to only clean the attic,
$r_\text{proxy} = [1, 0, 0]$, this is hackable with respect to $r_\text{true}$. %
To see this, note that according to $r_\text{proxy}$ cleaning the attic ($J_\text{proxy}=1$) is better than cleaning the bedroom and the kitchen ($J_\text{proxy}=0$). 
Yet, $r_\text{true}$ says that cleaning the attic ($J_\text{true}=1$) is worse than cleaning the bedroom and the kitchen ($J_\text{true}=2$).
This situation is illustrated in Figure~\ref{fig:front-fig}.

The second seemingly natural way to simplify a reward function is \textit{overlooking fine details}:
suppose $r_\text{true} = [1, 1.5, 2]$,
and we ask the robot to clean all the rooms,
$r_\text{proxy} = [1, 1, 1]$. For these values, the proxy
and true reward are unhackable. However, 
with a slightly less balanced true reward function 
such as $r_\text{true} = [1, 1.5, 3]$ the proxy does lead to hacking,
since the robot would falsely calculate that it's 
better to clean the attic and the bedroom than
the kitchen alone.

These two examples illustrate that while simplification of reward functions is sometimes possible, attempts at simplification can easily lead to  %
reward hacking. 
Intuitively, omitting/overlooking details is okay so long as all these details are not as important together as any of the details that we do share.
In general, it is not obvious what the proxy must look like to avoid reward hacking, suggesting we should take great care when using proxies.
For this specific environment, a proxy and a true reward are hackable exactly when there are two sets of rooms $S_1, S_2$ such that the true reward gives strictly higher value to cleaning $S_1$ than it does to cleaning $S_2$, and the proxy says the opposite: $J_1(S_1) > J_1(S_2) \; \& \; J_2(S_1) < J_2(S_2)$.
For a proof of this statement, see Appendix~\ref{app:cleaning_robot}.

\section{Related Work}

\begin{wrapfigure}{r}{0.265\textwidth}\centering
\vspace{-4mm}  %
    \includegraphics[width=\linewidth]{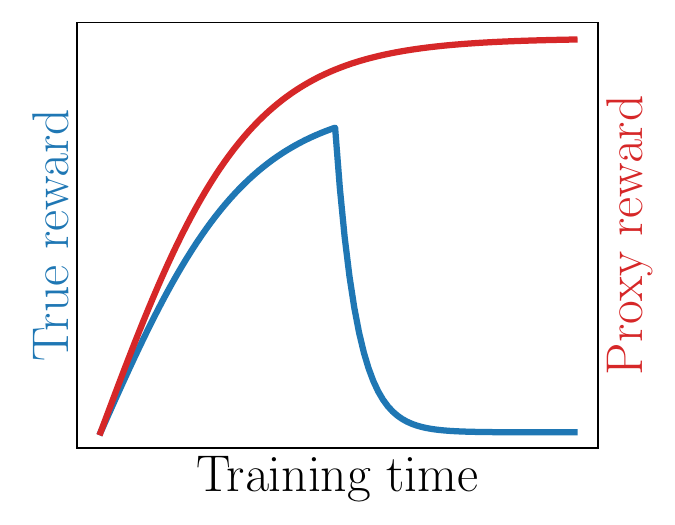}
    \caption{An illustration of reward hacking when optimizing a hackable proxy. The true reward first increases and then drops off, while the proxy reward continues to increase.}\label{fig:learning-curves}
    \vspace{-4mm}  %
\end{wrapfigure}
While we are the first to define hackability, we are far from the first to study specification hacking.
The observation that optimizing proxy metrics tends to lead to perverse instantiations is often called ``Goodhart's Law'', and is attributed to \citet{goodhart1984problems}.
\citet{Manheim2018Categorizing} provide a list of four mechanisms underlying this observation.

Examples of such unintended behavior abound in both RL and other areas of AI; \citet{krakovna2020specification} provide an extensive list.
Notable recent instances include a robot positioning itself between the camera and the object it is supposed to grasp in a way that tricks the reward model \citep{Amodei2017learning}, the previously mentioned boat race example \citep{clark2016faulty}, %
and a multitude of examples of reward model hacking in Atari \citep{ibarz2018reward}. 
Reward hacking can occur suddenly.
\citet{ibarz2018reward} and \citet{pan2022effects} showcase plots similar to one in Figure~\ref{fig:learning-curves}, where optimizing the proxy (either a learned reward model or a hand-specified reward function) first leads to both proxy and true rewards increasing, and then to a sudden phase transition where the true reward collapses while the proxy continues going up. 

Note that not all of these examples correspond to optimal behavior according to the proxy.
Indeed, convergence to suboptimal policies is a well-known issue in RL \citep{thrun1993issues}. 
As a consequence, improving optimization often leads to unexpected, qualitative changes in behavior.
For instance, \citet{Zhang2021onthe} demonstrate a novel cartwheeling behavior in the widely studied Half-Cheetah environment that exceeds previous performance so greatly that it breaks the simulator.
The unpredictability of RL optimization is a key motivation for our definition of hackability, since we cannot assume that agents will find an optimal policy.
Neither can we rule out the possibility of sudden improvements in proxy reward and corresponding qualitative changes in behavior.
Unhackability could provide confidence that reward hacking will not occur despite these challenges.

Despite the prevalence and potential severity of reward hacking, to our knowledge \citet{pan2022effects} provide the first peer-reviewed work that focuses specifically on it, although \citet{everitt2017reinforcement} tackle the closely related issue of reward corruption. %
The work of \citet{pan2022effects} is purely empirical; they manually construct proxy rewards for several diverse environments, and evaluate whether optimizing these proxies leads to reward hacking; in 5 out of 9 of their settings, it does.
In another closely related work, \citet{zhuang2020consequences} examine what happens when the proxy reward function depends on a strict subset of features relevant for the true reward. 
They show that optimizing the proxy reward can lead to arbitrarily low true reward under suitable assumptions. This can be seen as a seemingly valid simplification of the true reward that turns out to be (highly) hackable. 
While their result only applies to environments with decreasing marginal utility and increasing opportunity cost, we demonstrate hackability is an issue in arbitrary MDPs.

\newpage

Hackability is particularly concerning given arguments that reward optimizing behavior tends to be power-seeking \citep{turner2021optimalneurips}. %
But \citet{leike2018scalable} establish that any desired behavior (power-seeking or not) can in principle be specified as optimal via a reward function.\footnote{
Their result concerns non-stationary policies and use non-Markovian reward functions, but in Appendix~\ref{sec:any_policy_optimal}, we show how an analogous construction can be used with stationary policies and Markovian rewards.}
However, unlike us, they do not consider the entire policy preference ordering. %
Meanwhile, \citet{abel2021expressivity} note that Markov reward functions cannot specify arbitrary orderings over policies or trajectories, although they do not consider hackability.
Previous works consider reward functions to be equivalent if they preserve the ordering over policies \citep{ng1999policy, ng2000algorithms}.  Unhackability relaxes this, allowing equalities to be refined to inequalities, and vice versa.
Unhackability provides a notion of what it means to be  ``aligned enough''; \citet{brown2020value} provide an alternative.
They say a policy is $\varepsilon$-value aligned if its value at every state is close enough to optimal (according to the true reward function).
Neither notion implies the other.

\textit{Reward tampering} \citep{everitt2017reinforcement, kumar2020realab, uesato2020avoiding, everitt2021reward} can be viewed as a special case of reward hacking, and refers to an agent corrupting the process generating reward signals, e.g.\ by tampering with sensors, memory registers storing the reward signal, or other hardware.
\citet{everitt2017reinforcement} introduce the Corrupt Reward MDP (CRMDP), to model this possibility. 
A CRMDP distinguishes corrupted and uncorrupted rewards; these are exactly analogous to the proxy and true reward discussed in our work and others.
\citet{leike2018scalable} distinguish reward tampering from \textit{reward gaming}, where an agent achieves inappropriately high reward without tampering.
However, in principle, a reward function could prohibit all forms of tampering if the effects of tampering are captured in the state.
So this distinction is somewhat imprecise, and the CRMDP framework is general enough to cover both forms of hacking.

Our notion of simplification bears a close resemblance to quantilization \citep{taylor2016quantilizers}.
Quantilization returns a random policy from the top n\% best policies.
This is similar to equating the values of those policies, but a simplification may also equate the values of the bottom/middle n\%, etc.
Thus simplification may achieve a similar effect to quantilization without assuming that we are free to choose from among the best policies.

\section{Preliminaries}

We begin with an overview of reinforcement learning (RL) to establish our notation and terminology.
Section~\ref{sec:our_ definitions} introduces our novel definitions of hackability and simplification.

\subsection{Reinforcement Learning}\label{sec: reinforcement learning}

We expect readers to be familiar with the basics of RL, which can be found in \citet{sutton2018reinforcement}.
RL methods attempt to solve a sequential decision problem, typically formalised as a \textbf{Markov decision process (MDP)} , which is a tuple $(S,A,T,I,\mathcal{R},\gamma)$ where $S$ is a set of states, $A$ is a set of actions, $T : S \times A \rightarrow \Delta(S)$ is a transition function, $I \in \Delta(S)$ is an initial state distribution, $\mathcal{R}$ is a reward function, the most general form of which is $\mathcal{R} : S \times A \times S \rightarrow \Delta(\mathbb{R})$, and $\gamma \in [0,1]$ is the discount factor.
Here $\Delta(X)$ is the set of all distributions over $X$. 
A \textbf{stationary policy} is a function $\pi : S \rightarrow \Delta(A)$ that specifies a distribution over actions in each state, and a \textbf{non-stationary} policy is a function $\pi : (S \times A)^* \times S \rightarrow \Delta(A)$, where $*$ is the Kleene star.
A \textbf{trajectory} $\tau$ is a path $s_0,a_0,r_0,...$ through the MDP that is possible according to $T$, $I$, and $\mathcal{R}$. 
The \textbf{return} of a trajectory is the discounted sum of rewards $G(\tau) \doteq \sum_{t=0}^\infty \gamma^t r_t$, and the \textbf{value} of a policy is the expected return $J(\pi) \doteq \mathbb{E}_{\tau \sim \pi}[G(\tau)]$. 
We derive \textbf{policy (preference) orderings} from reward functions by ordering policies according to their value.
In this paper, we assume that $S$ and $A$ are finite, that $|A| > 1$, that all states are reachable, and that $\mathcal{R}(s,a,s')$ has finite mean for all $s,a,s'$.

In our work, we consider various reward functions for a given environment, which is then formally a \textbf{Markov decision process without reward} $MDP \setminus \mathcal{R} \doteq (S,A,T,I,\underline{\hspace*{0.3cm}},\gamma)$.
Having fixed an $MDP \setminus \mathcal{R}$, any reward function can be viewed as a function of only the current state and action by marginalizing over transitions: $\mathcal{R}(s,a) \doteq \sum_{s' \sim T(s' | s,a)} \mathcal{R}(s,a,s')$, we adopt this view from here on.
We define the \textbf{(discounted) visit counts} of a policy as $\mathcal{F}^\pi(s,a) \doteq \mathbb{E}_{\tau \sim \pi}[\sum_{i=0}^\infty \gamma^i \mathbbm{1}(s_i=s, a_i=a)]$. %
Note that 
$J(\pi) = \sum_{s,a} \mathcal{R}(s,a) \mathcal{F}^\pi(s,a)$, which we also write as $\langle \mathcal{R}, \mathcal{F}^\pi\rangle$.
When considering multiple reward functions in an $MDP \setminus \mathcal{R}$, we define $J_\mathcal{R}(\pi) \doteq \langle \mathcal{R}, \mathcal{F}^\pi\rangle$ and sometimes use $J_i(\pi) \doteq \langle \mathcal{R}_i, \mathcal{F}^\pi \rangle$ as shorthand.
We also use $\mathcal{F}: \Pi \rightarrow \mathbb{R}^{|S||A|}$ to denote the embedding of policies into Euclidean space via their visit counts, and define $\mathcal{F}(\dot{\Pi}) \doteq \{\mathcal{F}(\pi: \pi \in \dot{\Pi})\}$ for any $\dot{\Pi}$.
Moreover, we also use a second way to embed policies into Euclidean space; let $\mathcal{G}(\pi)$ be the $|S||A|$-dimensional vector where $\mathcal{G}(\pi)[s,a] = \pi(a \mid s)$, and let $\mathcal{G}(\dot{\Pi}) \doteq \{\mathcal{G}(\pi: \pi \in \dot{\Pi})\}$.

\subsection{Definitions and Basic Properties of Hackability and Simplification} \label{sec:our_ definitions}

Here, we formally define \emph{hackability} as a binary relation between reward functions.

\begin{definition}\label{def:unhackable}
A pair of reward functions $\mathcal{R}_1$, $\mathcal{R}_2$ are \textbf{hackable} relative to policy set $\Pi$ and an environment $(S,A,T,I,\underline{\hspace*{0.3cm}},\gamma)$ if 
there exist $\pi,\pi' \in \Pi$ such that  
$$
J_1(\pi) < J_1(\pi') \And J_2(\pi) > J_2(\pi'),
$$
else they are \textbf{unhackable}. 
\end{definition}

Note that an unhackable reward pair can have $J_1(\pi) < J_1(\pi') \And J_2(\pi) = J_2(\pi')$ or vice versa.
Unhackability is symmetric; this can be seen be swapping $\pi$ and $\pi'$ in Definition~\ref{def:unhackable}.
It is not transitive, however. In particular, the constant reward function is unhackable with respect to any other reward function, so if it \textit{were} transitive, any pair of policies would be unhackable. 
Additionally, we say that $\mathcal{R}_1$ and $\mathcal{R}_2$ are \textbf{equivalent} on a set of policies $\Pi$ if $J_1$ and $J_2$ induce the same ordering of $\Pi$, and that $\mathcal{R}$ is \textbf{trivial} on $\Pi$ if $J(\pi) = J(\pi')$ for all $\pi,\pi' \in \Pi$. It is clear that $\mathcal{R}_1$ and $\mathcal{R}_2$ are unhackable whenever they are equivalent, or one of them is trivial, but this is relatively uninteresting. Our central question is if and when there are other unhackable reward pairs. 

The symmetric nature of this definition is counter-intuitive, given that our motivation distinguishes the proxy and true reward functions.
We might break this symmetry by only considering policy sequences that monotonically increase the proxy, however, this is equivalent to our original definition of hackability: think of $\mathcal{R}_1$ as the proxy, and consider the sequence $\pi, \pi'$.
We could also restrict ourselves to policies that are approximately optimal according to the proxy; Corollary~\ref{corollary:approximately_optimal} shows that Theorem~\ref{thm:open-set} applies regardless of this restriction. %
Finally, we define \emph{simplification} as an asymmetric special-case of unhackability; Theorem~\ref{thm:finite_simplification} shows this is in fact a more demanding condition. %

\begin{definition}
$\mathcal{R}_2$ is a \textbf{simplification} of $\mathcal{R}_1$ relative to policy set $\Pi$ if for all $\pi,\pi' \in \Pi$, 
$$
J_1(\pi) < J_1(\pi') \implies J_2(\pi) \leq J_2(\pi') 
\And 
J_1(\pi) = J_1(\pi') \implies J_2(\pi) = J_2(\pi')
$$
and there exist $\pi,\pi' \in \Pi$ such that $J_2(\pi) = J_2(\pi')$ but $J_1(\pi) \neq J_1(\pi')$. Moreover, if $\mathcal{R}_2$ is trivial
then we say that this is a \textbf{trivial simplification}.
\end{definition}

Intuitively, while unhackability allows replacing inequality with equality -- or vice versa -- a simplification can only replace inequalities with equality, collapsing distinctions between policies.
When $\mathcal{R}_1$ is a simplification of $\mathcal{R}_2$, we also say that $\mathcal{R}_2$ is a \textbf{refinement} of $\mathcal{R}_1$.
We denote this relationship as $\mathcal{R}_1 \trianglelefteq \mathcal{R}_2$ or $\mathcal{R}_2 \trianglerighteq \mathcal{R}_1$ ; the narrowing of the triangle at $R_1$ represents the collapsing of distinctions between policies.
If $\mathcal{R}_1 \trianglelefteq \mathcal{R}_2 \trianglerighteq  \mathcal{R}_3$, then we have that $\mathcal{R}_1, \mathcal{R}_3$ are unhackable,\footnote{If $J_3(\pi) > J_3(\pi')$ then $J_2(\pi) > J_2(\pi')$, since $\mathcal{R}_2 \trianglerighteq  \mathcal{R}_3$, and if $J_2(\pi) > J_2(\pi')$ then $J_1(\pi) \geq J_1(\pi')$, since $\mathcal{R}_1 \trianglelefteq  \mathcal{R}_2$. It is therefore not possible that $J_3(\pi) > J_3(\pi')$ but $J_1(\pi) < J_1(\pi')$.} but if $\mathcal{R}_1 \trianglerighteq \mathcal{R}_2 \trianglelefteq \mathcal{R}_3$, then this is not necessarily the case.\footnote{Consider the case where $\mathcal{R}_2$ is trivial -- then $\mathcal{R}_1 \trianglerighteq \mathcal{R}_2 \trianglelefteq \mathcal{R}_3$ for any $\mathcal{R}_1, \mathcal{R}_3$.}

Note that these definitions are given relative to some $MDP \setminus \mathcal{R}$, although we often assume the environment in question is clear from context and suppress this dependence. The dependence on the policy set $\Pi$, on the other hand, plays a critical role in our results.

\section{Results}

Our results are aimed at understanding when it is possible to have an unhackable proxy reward function.
We first establish (in Section~\ref{sec:results_all}) that (non-trivial) unhackability is impossible when considering the set of all policies.
We might imagine that restricting ourselves to a set of sufficiently good (according to the proxy) policies would remove this limitation, but we show that this is not the case.
We then analyze finite policy sets (with deterministic policies as a special case), and  establish necessary and sufficient conditions for unhackability and simplification.
Finally, we demonstrate via example that non-trivial simplifications are also possible for some infinite policy sets in Section~\ref{sec:results_infinite}.

\newpage
\subsection{Non-trivial Unhackability Requires Restricting the Policy Set}\label{sec:results_all}

\begin{wrapfigure}{r}{0.255\textwidth}\centering
\vspace{-14.5mm}
    \includegraphics[width=0.27\textwidth]{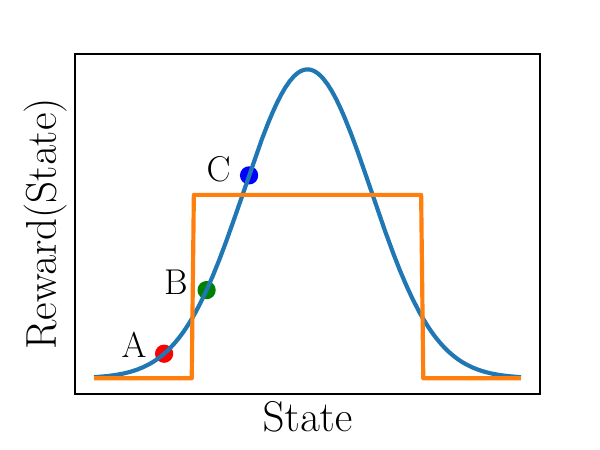}
    \caption{
    Two reward functions. %
    While the step function may seem like a simplification of the Gaussian, these reward functions are hackable.}
    \label{fig:gaussian_and_step}
    \vspace{-8.001mm}  %
\end{wrapfigure}

We start with a motivating example.
Consider the setting shown in Figure~\ref{fig:gaussian_and_step}, where the agent can move left/stay-still/right and gets a reward depending on its state.
Let the Gaussian (blue) be the true reward $\mathcal{R}_1$ and the step function (orange) be the proxy $\mathcal{R}_2$. These are hackable.
To see this, consider being at state $B$. Let $\pi(B)$ travel to $A$ or $C$ with 50/50 chance, and compare with the policy $\pi'$ that stays at $B$. Then we have that $J_1(\pi) > J_1(\pi')$ and $J_2(\pi) < J_2(\pi')$.%

Generally, we might hope that some environments allow for unhackable reward pairs that are not equivalent or trivial.
Here we show that this~is not the case, unless we impose restrictions on the set of policies we consider. 

First note that if we consider \emph{non-stationary} policies, this result is relatively straightforward.
Suppose $\mathcal{R}_1$ and $\mathcal{R}_2$ are \emph{unhackable} and \emph{non-trivial} on the set $\Pi^N$ of all non-stationary policies, and let $\pi^\star$ be a policy that maximises ($\mathcal{R}_1$ and $\mathcal{R}_2$) reward, and $\pi_\bot$ be a policy that \emph{minimises} ($\mathcal{R}_1$ and $\mathcal{R}_2$) reward. Then the policy $\pi_\lambda$ that plays $\pi^\star$ with probability $\lambda$ and $\pi_\bot$ with probability $1-\lambda$ is a policy in $\Pi^N$. Moreover, for any $\pi$ there are two unique $\alpha, \beta \in [0,1]$ such that $J_1(\pi) = J_1(\pi_\alpha)$ and $J_2(\pi) = J_2(\pi_\beta)$. 
Now, if $\alpha \neq \beta$, then either $J_1(\pi) < J_1(\pi_\delta)$ and $J_2(\pi) > J_2(\pi_\delta)$, or vice versa, for $\delta = (\alpha + \beta)/2$. 
If $\mathcal{R}_1$ and $\mathcal{R}_2$ are unhackable then this cannot happen, so it must be that $\alpha = \beta$. 
This, in turn, implies that $J_1(\pi) = J_1(\pi')$ iff $J_2(\pi) = J_2(\pi')$, and so $\mathcal{R}_1$ and $\mathcal{R}_2$ are \emph{equivalent}. This means that no interesting unhackability can occur on the set of all non-stationary policies.

The same argument cannot be applied to the set of \emph{stationary} policies, because $\pi_\lambda$ is typically not stationary, and mixing stationary policies' action probabilities does not have the same effect. 
For instance, consider a hallway environment where an agent can either move left or right. Mixing the ``always go left'' and ``always go right'' policies corresponds to picking a direction and sticking with it, whereas  mixing their action probabilities corresponds to choosing to go left or right independently at every time-step.
However, we will see that there still cannot be any interesting unhackability on this policy set, and, more generally, that there cannot be any interesting unhackability on any set of policies which contains an \emph{open subset}. Formally, a set of (stationary) policies $\dot{\Pi}$ is open if
$\mathcal{G}(\dot{\Pi})$ is open in the smallest affine space that contains $\mathcal{G}(\Pi)$, for the set of all stationary policies $\Pi$.
We will use the following lemma:

\begin{lemma}\label{lemma:homeomorphism}
In any $MDP \setminus \mathcal{R}$, if $\dot{\Pi}$ is an open set of policies, then
$\mathcal{F}(\dot{\Pi})$ is open in $\mathbb{R}^{|S|(|A|-1)}$, and $\mathcal{F}$ is a homeomorphism between $\mathcal{G}(\dot{\Pi})$ and $\mathcal{F}(\dot{\Pi})$. %
\end{lemma}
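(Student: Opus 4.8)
The plan is to realise $\mathcal{F}$ (viewed as the map $\mathcal{G}(\pi)\mapsto\mathcal{F}^\pi$) as a continuous bijection between two subsets of Euclidean spaces of the same dimension $n \doteq |S|(|A|-1)$, exhibit an explicit continuous inverse, and then invoke the \emph{invariance of domain} theorem to upgrade ``homeomorphism onto its image'' into ``image is open''. First I would fix coordinates: $\mathcal{G}(\Pi)$ is a product of $|S|$ probability simplices over $A$, so its affine hull has dimension exactly $n$, and $\dot\Pi$ being open means $\mathcal{G}(\dot\Pi)$ is open in a copy of $\mathbb{R}^n$. The forward map has a closed form: writing $d^\pi(s)\doteq\sum_a\mathcal{F}^\pi(s,a)$ and $P^\pi(s\mid s')\doteq\sum_a \pi(a\mid s')T(s\mid s',a)$, the flow equations give $d^\pi=(\mathrm{Id}-\gamma P^\pi)^{-1}I$ (valid since $\gamma P^\pi$ has spectral radius below $1$ for $\gamma<1$), with $I$ the initial-distribution vector, and then $\mathcal{F}^\pi(s,a)=d^\pi(s)\,\pi(a\mid s)$. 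Because matrix inversion is rational in the entries and the determinant never vanishes, $\mathcal{F}$ is a continuous (indeed smooth) function of $\mathcal{G}(\pi)$.

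Next I would pin down the target space. Summing the visit-count identity $\sum_a\mathcal{F}^\pi(s,a)=I(s)+\gamma\sum_{s',a'}T(s\mid s',a')\mathcal{F}^\pi(s',a')$ over $s$ shows $\mathcal{F}(\Pi)$ lies in the affine set $V$ cut out by these $|S|$ equations; a short rank computation (the only solution of $c_{s'}=\gamma\sum_s T(s\mid s',a')c_s$ for all $a'$ is $c=0$ when $\gamma<1$) shows the constraints are independent, so $\dim V = |S||A|-|S| = n$. Thus domain and target both live in $n$-dimensional affine spaces, as required. The crucial remaining step is injectivity together with a continuous inverse. Here I would use that an open $\dot\Pi$ cannot touch the boundary of the simplex product — any point with some $\pi(a\mid s)=0$ has every affine neighbourhood leaving $\mathcal{G}(\Pi)$ — so every $\pi\in\dot\Pi$ has full support over actions; combined with the standing reachability assumption this forces $d^\pi(s)>0$ for all $s$. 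Consequently the normalisation $\mathcal{F}^\pi\mapsto\big(\mathcal{F}^\pi(s,a)/d^\pi(s)\big)_{s,a}$ is well-defined and continuous on $\mathcal{F}(\dot\Pi)$ and recovers $\mathcal{G}(\pi)$, yielding injectivity and a continuous two-sided inverse.

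Finally I would assemble the pieces: $\mathcal{F}$ restricted to $\mathcal{G}(\dot\Pi)$ is a continuous injection from an open subset of $\mathbb{R}^n$ into $V\cong\mathbb{R}^n$, so by invariance of domain its image $\mathcal{F}(\dot\Pi)$ is open in $V$, i.e.\ open in $\mathbb{R}^{|S|(|A|-1)}$; and since the map has a continuous inverse it is a homeomorphism of $\mathcal{G}(\dot\Pi)$ onto $\mathcal{F}(\dot\Pi)$. I expect the main obstacle to be the invertibility step — guaranteeing $d^\pi(s)>0$ so that normalisation actually inverts $\mathcal{F}$ — together with making the domain and target dimensions genuinely coincide so that invariance of domain applies; the observation that openness of $\dot\Pi$ rules out boundary (zero-probability) actions is exactly what makes the reachability hypothesis suffice. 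A minor caveat is $\gamma=1$, where finiteness of visit counts (proper/episodic policies) must be assumed for $(\mathrm{Id}-\gamma P^\pi)^{-1}$ and the normalisation to make sense; the argument is otherwise unchanged.
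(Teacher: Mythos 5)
Your proof is correct, and its skeleton coincides with the paper's: both arguments restrict to the $|S|(|A|-1)$-dimensional affine hull of $\mathcal{G}(\Pi)$, establish injectivity of $\mathcal{F}$ on $\dot{\Pi}$ by observing that openness forces full-support policies, which (with reachability) gives $d^\pi(s)>0$ and lets one recover $\pi(a\mid s)=\mathcal{F}^\pi(s,a)/d^\pi(s)$, and then invoke invariance of domain. The one genuinely different ingredient is how you show the image lands in an affine space of the matching dimension $|S|(|A|-1)$: the paper argues by duality with reward functions, using potential shaping to exhibit $|S|-1$ free directions in $\vec{\mathcal{R}}$ that leave all values $J(\pi)$ unchanged (plus the fixed $\ell_1$-norm $1/(1-\gamma)$ to drop one more dimension), whereas you impose the $|S|$ Bellman flow-conservation constraints directly on the visit counts and check by a rank computation that they are independent when $\gamma<1$. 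Your route is more self-contained and primal (it never mentions reward functions), and it comes with two small bonuses: an explicit smooth formula $\mathcal{F}^\pi(s,a)=\big((\mathrm{Id}-\gamma P^\pi)^{-1}I\big)(s)\,\pi(a\mid s)$ that makes continuity immediate rather than asserted, and an explicit continuous inverse (normalisation by $d^\pi$), so the homeomorphism claim does not itself depend on invariance of domain --- only the openness of the image does. The paper's potential-shaping route, by contrast, reuses machinery that is already needed elsewhere in reward-equivalence arguments. Your caveat about $\gamma=1$ is apt; the paper nominally allows $\gamma\in[0,1]$ but its proof likewise needs finite discounted visit counts.
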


Using this lemma, we can show that interesting unhackability is impossible on any set of stationary policies $\hat{\Pi}$ which contains an open subset $\dot{\Pi}$. 
Roughly, if $\mathcal{F}(\dot{\Pi})$ is open, and $\mathcal{R}_1$ and $\mathcal{R}_2$ are non-trivial and unhackable on $\dot{\Pi}$, then the fact that $J_1$ and $J_2$ have a linear structure on $\mathcal{F}(\hat{\Pi})$ implies that $\mathcal{R}_1$ and $\mathcal{R}_2$ must be equivalent on $\dot{\Pi}$. From this, and the fact that $\mathcal{F}(\dot{\Pi})$ is open, it follows that $\mathcal{R}_1$ and $\mathcal{R}_2$ are equivalent everywhere.

\begin{theorem} \label{thm:open-set} %
In any $MDP \setminus \mathcal{R}$, if $\hat{\Pi}$ contains an open set, then any pair of reward functions that are unhackable and non-trivial on $\hat{\Pi}$ are equivalent on $\hat{\Pi}$.
\end{theorem}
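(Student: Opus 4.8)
The plan is to reduce unhackability on the open set to a statement about two linear functionals and then show those functionals must be positively proportional. First I would extract the open subset $\dot{\Pi} \subseteq \hat{\Pi}$ and apply Lemma~\ref{lemma:homeomorphism} to conclude that $\mathcal{F}(\dot{\Pi})$ is open in the affine hull $V$ of $\mathcal{F}(\Pi)$ (the copy of $\mathbb{R}^{|S|(|A|-1)}$ in the lemma). Writing $V_0$ for the linear space of directions of $V$, I would observe that each $J_i$ is the restriction to $V$ of the linear functional $f \mapsto \langle \mathcal{R}_i, f\rangle$, so its behaviour on $V$ is controlled entirely by the orthogonal projection $g_i$ of $\mathcal{R}_i$ onto $V_0$: for any $\pi,\pi'$ one has $J_i(\pi') - J_i(\pi) = \langle g_i, \mathcal{F}^{\pi'} - \mathcal{F}^\pi\rangle$.

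Second, I would translate unhackability into a condition on $g_1, g_2$. Since unhackability on $\hat{\Pi}$ restricts to unhackability on $\dot{\Pi}$, there is no pair $f,f' \in \mathcal{F}(\dot{\Pi})$ with $\langle g_1, f'-f\rangle > 0$ and $\langle g_2, f'-f\rangle < 0$. Because $\mathcal{F}(\dot{\Pi})$ is open in $V$, the difference set $\mathcal{F}(\dot{\Pi}) - \mathcal{F}(\dot{\Pi})$ contains a neighbourhood of the origin in $V_0$; as the implication is invariant under positive scaling, it extends to every direction, giving $\langle g_1, d\rangle > 0 \implies \langle g_2, d\rangle \geq 0$ for all $d \in V_0$. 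Replacing $d$ by $-d$ yields the mirrored implication, so the open half-space $\{\langle g_1,\cdot\rangle > 0\}$ lies in $\{\langle g_2,\cdot\rangle \geq 0\}$ and symmetrically for the negatives.

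Third, and this is the linear-algebra core, I would argue these containments force $g_2 = c\,g_1$ for a scalar $c$. The key claim is $\ker g_1 \subseteq \ker g_2$: if some $d_0$ had $\langle g_1, d_0\rangle = 0$ but $\langle g_2, d_0\rangle \neq 0$, I could perturb $d_0$ by a small multiple of a vector on which $g_1$ is positive, pushing $\langle g_1,\cdot\rangle$ strictly negative while keeping $\langle g_2,\cdot\rangle$ of the wrong sign, contradicting the implication just derived. With $\ker g_1 \subseteq \ker g_2$ and $g_1 \neq 0$, the functionals are proportional. Non-triviality on $\hat{\Pi}$ excludes $g_1 = 0$ and $g_2 = 0$, since an affine function constant on an open subset of $V$ is constant on all of $V$ and hence trivial; the direction of the half-space containment then forces $c > 0$.

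Finally, I would conclude: $g_2 = c\,g_1$ with $c > 0$ gives $J_2(\pi') - J_2(\pi) = c\,(J_1(\pi') - J_1(\pi))$ for all $\pi,\pi' \in \hat{\Pi}$, so $J_1$ and $J_2$ induce the same ordering and $\mathcal{R}_1, \mathcal{R}_2$ are equivalent on $\hat{\Pi}$. Although proportionality is derived from the open subset, it is a statement about the fixed vectors $g_1,g_2$ and therefore governs the ordering on all of $\hat{\Pi}$, which is exactly what upgrades equivalence on $\dot{\Pi}$ to equivalence everywhere. I expect the main obstacle to be the third step: justifying the kernel inclusion cleanly from the one-sided implication, and checking that non-triviality on $\hat{\Pi}$ correctly transfers to $g_1, g_2 \neq 0$ on the open subset.
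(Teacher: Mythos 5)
Your proof is correct, and it uses the same essential ingredients as the paper's: Lemma~\ref{lemma:homeomorphism} to get openness of $\mathcal{F}(\dot{\Pi})$, linearity of $J$ in visit counts, and the observation that if the two functionals disagree in direction then the open set supplies a pair of policies witnessing hackability. Where you differ is in the packaging. The paper fixes two arbitrary policies $\pi,\pi'\in\hat{\Pi}$ with $J_1(\pi)=J_1(\pi')$, rescales their offsets from a base point to drag analogues of them into the open ball, shows the level hyperplanes $H_1=H_2$ there, and concludes only that the equality classes of $J_1$ and $J_2$ coincide --- it then has to invoke unhackability a second time to upgrade matching equalities to a matching order. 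You instead prove the stronger statement that the projections $g_1,g_2$ of the reward vectors onto the direction space of the affine hull are positively proportional, via the half-space containment and the kernel inclusion $\ker g_1\subseteq\ker g_2$. This buys you two things: the ``pull the policies into the ball'' step becomes unnecessary, since proportionality of the fixed functionals automatically governs all of $\hat{\Pi}$ (and indeed all of $\Pi$); and your perturbation argument for the kernel inclusion makes rigorous the step that the paper's proof leaves somewhat informal, namely that $H_1\neq H_2$ yields a point $f_{12}$ in the open set with the two inner products of opposite sign. The one presentational nit is that you should note $g_1\neq 0$ (from non-triviality) before picking the vector on which $g_1$ is positive in the kernel argument, but you do establish this, so the logic closes.
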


Since simplification is a special case of unhackability, this also implies that non-trivial simplification is impossible for any such policy set. Also note that Theorem~\ref{thm:open-set} makes \emph{no assumptions} about the transition function, etc. From this result, we can show that interesting unhackability always is impossible on the set $\Pi$ of all (stationary) policies. In particular, note that the set $\tilde{\Pi}$ of all policies that always take each action with positive probability is an open set, and that $\tilde{\Pi} \subset \Pi$.

\begin{corollary}
In any $MDP \setminus \mathcal{R}$, any pair of reward functions that are unhackable and non-trivial on the set of all (stationary) policies $\Pi$ are equivalent on $\Pi$.
\end{corollary}

Theorem~\ref{thm:open-set} can also be applied to many other policy sets.
For example, we might not care about the hackability resulting from policies with low proxy reward, as we would not expect a sufficiently good learning algorithm to learn such policies.
This leads us to consider the following definition:

\newpage

\begin{definition}
A (stationary) policy $\pi$ is $\varepsilon$-suboptimal if $J(\pi) \geq J(\pi^\star) - \varepsilon$.
\end{definition}

Alternatively, if the learning algorithm always uses a policy that is \enquote{nearly} deterministic (but with some probability of exploration), then we might not care about hackability resulting from very stochastic policies, leading us to consider the following definition:

\begin{definition}
A (stationary) policy $\pi$ is $\delta$-deterministic if $\forall s \in S \; \exists a \in A: \mathbb{P}(\pi(s) = a) \geq \delta$.
\end{definition}

Unfortunately, both of these sets contain open subsets, which means they are subject to Theorem~\ref{thm:open-set}.%

\begin{corollary}
\label{corollary:approximately_optimal}
In any $MDP \setminus \mathcal{R}$, any pair of reward functions that are unhackable and non-trivial on the set of all $\varepsilon$-suboptimal policies ($\varepsilon>0$) $\Pi^\varepsilon$ are equivalent on $\Pi^\varepsilon$, and any pair of reward functions that are unhackable and non-trivial on the set of all $\delta$-deterministic policies ($\delta<1$) $\Pi^\delta$ are equivalent on $\Pi^\delta$.
\end{corollary}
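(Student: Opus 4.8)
The plan is to reduce everything to Theorem~\ref{thm:open-set}: since that theorem already shows that any non-trivial unhackable pair must be equivalent on a policy set containing an open subset, it suffices to verify that each of $\Pi^\varepsilon$ and $\Pi^\delta$ contains a nonempty set that is open in the smallest affine space containing $\mathcal{G}(\Pi)$. The one fact I would invoke throughout is that $J(\pi) = \langle \mathcal{R}, \mathcal{F}^\pi \rangle$ is a continuous function of $\mathcal{G}(\pi)$; this holds because the discounted visit counts $\mathcal{F}^\pi$ depend continuously (indeed rationally) on the action probabilities $\pi(a \mid s)$, so $J$ is a composition of continuous maps.

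For $\Pi^\varepsilon$, I would work inside the relative interior $\tilde{\Pi}$, the open set of policies assigning strictly positive probability to every action. Consider $U \doteq \{\pi \in \tilde{\Pi} : J(\pi) > J(\pi^\star) - \varepsilon\}$. As the intersection of the open set $\tilde{\Pi}$ with the preimage under the continuous map $J$ of the open ray $(J(\pi^\star) - \varepsilon, \infty)$, the set $U$ is open in the affine hull. It is also nonempty: since $\tilde{\Pi}$ is dense in $\Pi$, pick $\pi_n \in \tilde{\Pi}$ with $\mathcal{G}(\pi_n) \to \mathcal{G}(\pi^\star)$; by continuity $J(\pi_n) \to J(\pi^\star)$, so $J(\pi_n) > J(\pi^\star) - \varepsilon$ for large $n$ (using $\varepsilon > 0$), placing such $\pi_n$ in $U$. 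Since $U \subseteq \Pi^\varepsilon$, the set $\Pi^\varepsilon$ contains an open set and Theorem~\ref{thm:open-set} applies.

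For $\Pi^\delta$, I would exhibit an explicit interior policy with strict slack in the defining condition. Fix an arbitrary action $a_1$ and let $\pi_0$ play $a_1$ with probability $(1+\delta)/2$ in every state, splitting the remaining mass $(1-\delta)/2 > 0$ equally among the other $|A|-1 \geq 1$ actions. Then $\pi_0 \in \tilde{\Pi}$, and in each state $\pi_0(a_1 \mid s) = (1+\delta)/2 > \delta$ since $\delta < 1$, so the $\delta$-deterministic constraint holds with strict inequality. Because the finitely many coordinate maps $\pi \mapsto \pi(a_1 \mid s)$ are continuous, a sufficiently small affine-hull neighborhood of $\pi_0$ still satisfies $\pi(a_1 \mid s) > \delta$ for all $s$ (hence lies in $\Pi^\delta$) while remaining in $\tilde{\Pi}$; this neighborhood is a nonempty open set inside $\Pi^\delta$, and Theorem~\ref{thm:open-set} again applies.

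The only genuinely delicate point is the meaning of ``open'': the product of simplices $\mathcal{G}(\Pi)$ has empty interior in the ambient $\mathbb{R}^{|S||A|}$ because of the normalization constraints $\sum_a \pi(a \mid s) = 1$, so both constructions must be carried out relative to the affine hull and must avoid the boundary faces where some action probability vanishes. Staying inside $\tilde{\Pi}$ is precisely what guarantees this, which is why I would route both arguments through the relative interior rather than working with $\pi^\star$ or a deterministic policy directly.
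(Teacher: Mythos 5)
Your proposal is correct and follows essentially the same route as the paper: both reduce the corollary to Theorem~\ref{thm:open-set} by exhibiting a nonempty (relatively) open subset inside each of $\Pi^\varepsilon$ and $\Pi^\delta$, obtained by perturbing an optimal policy (resp.\ a near-deterministic policy) into the relative interior where all action probabilities are positive. Your version spells out the continuity and affine-hull details more explicitly than the paper's brief argument, but the underlying idea is identical.
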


Intuitively, Theorem~\ref{thm:open-set} can be applied to any policy set with \enquote{volume} in policy space.

\subsection{Finite Policy Sets}\label{sec:results_finite}
Having established that interesting unhackability is impossible relative to the set of all policies, we now turn our attention to the case of \emph{finite} policy sets.
Note that this includes the set of all deterministic policies, since we restrict our analysis to finite MDPs.
Surprisingly, here we find that non-trivial non-equivalent unhackable reward pairs \textit{always} exist.

\begin{theorem}\label{thm:finite_unhackability}
For any $MDP \setminus \mathcal{R}$, any finite set of policies $\hat{\Pi}$ containing at least two $\pi,\pi'$ such that $\mathcal{F}(\pi) \neq \mathcal{F}(\pi')$, and any reward function $\mathcal{R}_1$, there is a non-trivial reward function $\mathcal{R}_2$ such that $\mathcal{R}_1$ and $\mathcal{R}_2$ are unhackable but not equivalent.
\end{theorem}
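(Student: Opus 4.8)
The plan is to push everything down to the finite set of visit-count vectors and then observe that, subject to unhackability, the only ways to make $\mathcal{R}_2$ induce a different order from $\mathcal{R}_1$ are to \emph{refine} it (split a tie into a strict inequality) or to \emph{coarsen} it (collapse a strict inequality into a tie). First I would replace $\hat{\Pi}$ by the distinct visit-count vectors $v_1,\dots,v_m$ it realizes, with $m\geq 2$ by hypothesis, and record $c_i \doteq \langle \mathcal{R}_1, v_i\rangle$. Since $J_1(\pi)=\langle\mathcal{R}_1,\mathcal{F}^\pi\rangle$ depends on $\pi$ only through $\mathcal{F}^\pi$, any candidate $\mathcal{R}_2$ acts on $\hat{\Pi}$ purely through the linear functional $v\mapsto\langle\mathcal{R}_2,v\rangle$; unhackability is then exactly weak monotonicity of this functional along every strictly $\mathcal{R}_1$-ordered pair, equivalence is inducing the identical order, and non-triviality is being non-constant on $\{v_i\}$. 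I would split on whether $\mathcal{R}_1$ assigns equal value to two distinct $v_i,v_j$.

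If such a tie exists (in particular whenever $\mathcal{R}_1$ is trivial on $\hat{\Pi}$), I would refine by setting $\mathcal{R}_2 = \mathcal{R}_1 + \varepsilon\,(v_i - v_j)$. Since $\langle v_i-v_j,\,v_i-v_j\rangle = \|v_i-v_j\|^2 > 0$, this breaks the chosen tie for every $\varepsilon\neq 0$; and because only finitely many strict inequalities $c_a<c_b$ must be preserved, any $\varepsilon$ below the least such gap divided by the largest relevant cross term keeps all of them strict and in the same direction. The result is unhackable, not equivalent (a tie became strict), and non-trivial (a preserved strict pair witnesses this, or, in the fully trivial case, the broken tie itself does).

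The substantive case, and the main obstacle, is when $\mathcal{R}_1$ strictly orders all of $v_1,\dots,v_m$, say $c_1<\dots<c_m$: refinement now only reproduces the same order, so I must coarsen, and the linearity of reward makes it unclear that any coarsening is realizable by a reward vector at all. My plan is a convex-geometry argument. The cone $C = \{ w : \langle w,\, v_{i+1}-v_i\rangle \geq 0 \text{ for all } i\}$ contains $\mathcal{R}_1$ in its interior (all gaps strict). Moving from $\mathcal{R}_1$ along a ray in a direction $u$ with $\langle u,\, v_{i+1}-v_i\rangle < 0$ for some $i$ — e.g.\ $u=-(v_{i+1}-v_i)$, nonzero since the $v_i$ are distinct — the ray must exit $C$ at a finite $t^\star>0$ where at least one gap vanishes while all others stay $\geq 0$. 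Choosing $u$ generically makes the exit facet unique, so exactly one adjacent pair merges and, provided $m\geq 3$, at least one gap stays strict; then $\mathcal{R}_2 = \mathcal{R}_1 + t^\star u$ is weakly monotone in the $\mathcal{R}_1$-order (unhackable), takes at least two values (non-trivial), and has collapsed a strict inequality (not equivalent).

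The delicate points I expect to dominate are all in the coarsening case: confirming that the exit occurs at finite $t^\star$ rather than at infinity (this is why $u$ must \emph{strictly} decrease some gap) and that a strict distinction survives the merge (this is what forces $m\geq 3$). The one genuinely degenerate situation is a strictly ordered two-point set, where the construction has no room left. To isolate that the obstruction there is order-theoretic rather than one of realizability, I would note that for $\gamma<1$ the constant discounted mass $\sum_{s,a}\mathcal{F}^\pi(s,a)=1/(1-\gamma)$ places all $v_i$ on an affine hyperplane off the origin, so two distinct visit counts are automatically linearly independent and $\mathrm{span}\{v_i\}$ has dimension at least two; hence the needed linear functional always exists as a reward, and whenever it fails to yield a non-trivial non-equivalent $\mathcal{R}_2$ the failure is purely combinatorial.
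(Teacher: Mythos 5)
Your two-case strategy (perturb to break a tie; slide along a ray until some strict inequality first collapses) is a more constructive rendering of the same core idea as the paper's proof, which walks a continuous path from $\vec{\mathcal{R}}_1$ to $-\vec{\mathcal{R}}_1$ while avoiding the subspace of trivial reward vectors and stops at the first point where the ordering changes. Your reduction of unhackability to weak monotonicity of $v \mapsto \langle \mathcal{R}_2, v\rangle$ along adjacent pairs of the $\mathcal{R}_1$-order is the right setup, and the tie-breaking case is clean and complete.

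The gap is in the coarsening case, and you have correctly located it but not closed it. The claim that a generic $u$ makes the exit facet of $C$ unique fails whenever two adjacent difference vectors $g_i = v_{i+1}-v_i$ and $g_j = v_{j+1}-v_j$ are positively proportional: then the two constraints cut out the same half-space and become tight simultaneously for \emph{every} choice of $u$. In the extreme case where all the $g_i$ are positively proportional --- e.g.\ three policies with collinear, equally spaced visit counts, which the convexity of the occupancy polytope permits --- every exit from $C$ lands on the trivial reward and the construction yields nothing; the strictly ordered two-point set you flag is the $m=2$ instance of the same problem. Your closing paragraph does not repair this: linear independence of $v_1,v_2$ guarantees that functionals realizing any given pair of values exist, but the obstruction is order-theoretic, and no argument can remove it. Indeed the theorem as stated is false in these cases: with $\hat{\Pi}=\{\pi,\pi'\}$, $\mathcal{F}(\pi)\neq\mathcal{F}(\pi')$ and $J_1(\pi)<J_1(\pi')$, any non-trivial $\mathcal{R}_2$ either agrees with $\mathcal{R}_1$ (equivalent) or reverses it (hackable). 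Be aware that the paper's own proof hides the same issue in an off-by-one dimension count: it asserts the trivial rewards form a subspace of dimension at most $|S||A|-2$ as soon as $\mathcal{F}(\hat{\Pi})$ contains two linearly independent vectors, but that set is the orthogonal complement of $\mathrm{span}\{v-v' : v,v'\in\mathcal{F}(\hat{\Pi})\}$, which is a full hyperplane --- and hence does separate $\vec{\mathcal{R}}_1$ from $-\vec{\mathcal{R}}_1$ --- exactly in the degenerate configurations above. Your argument is correct on all cases where the theorem actually holds (namely when $\mathcal{R}_1$ ties two distinct visit counts, or when the $g_i$ span at least two dimensions); to finish, you should either add that hypothesis explicitly and justify the genericity claim under it, or record the two-point counterexample.
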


\begin{wrapfigure}{r}{0.47\textwidth}\centering
    \includegraphics[width=0.48\textwidth]{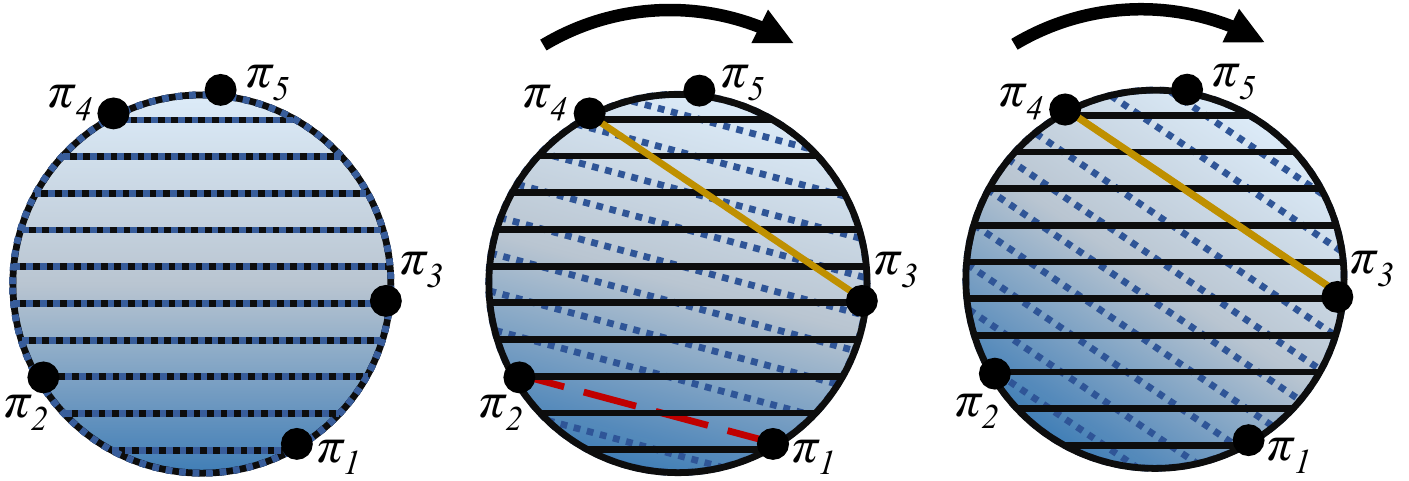}
    \caption{
    An illustration of the state-action occupancy space with a reward function defined over it. Points correspond to policies' state-action occupancies. Shading intensity indicates expected reward. 
    Rotating the reward function to make $J(\pi_3) > J(\pi_4)$ passes through a reward function that sets $J(\pi_1) = J(\pi_2)$. 
    Solid black lines are contour lines 
    of the original reward function, dotted blue lines are contour lines of the rotated reward function.
    }
    \label{fig:plates}
\end{wrapfigure}

This proof proceeds by finding a path from $\mathcal{R}_1$ to another reward function $\mathcal{R}_3$ that is hackable with respect to $\mathcal{R}_1$.
Along the way to reversing one of $\mathcal{R}_1$'s inequalities, we must encounter a reward function $\mathcal{R}_2$ that instead replaces it with equality.
In the case that dim$(\hat \Pi) = 3$, we can visualize moving along this path as rotating the contour lines of a reward function defined on the space containing the policies' discounted state-action occupancies, see Figure~\ref{fig:plates}.
This path can be constructed so as to avoid any reward functions that produce trivial policy orderings, thus guaranteeing $\mathcal{R}_2$ is non-trivial.
For a \emph{simplification} to exist, we require some further conditions, as established by the following theorem:

\begin{theorem}\label{thm:finite_simplification}
Let $\hat{\Pi}$ be a finite set of policies, and $\mathcal{R}_1$ a reward function. The following procedure determines if there exists a non-trivial simplification of $\mathcal{R}_1$ in a given $MDP \setminus \mathcal{R}$:
\begin{enumerate}
    \item Let $E_1 \dots E_m$ be the partition of $\hat{\Pi}$ where $\pi,\pi'$ belong to the same set iff $J(\pi) = J(\pi')$.
    \item For each such set $E_i$, select a policy $\pi_i \in E_i$ and let $Z_i$ be the set of vectors that is obtained by subtracting $\mathcal{F}(\pi_i)$ from each element of $\mathcal{F}(E_i)$.
\end{enumerate}
Then there is a non-trivial simplification of $\mathcal{R}$ iff $\mathrm{dim}(Z_1 \cup \dots \cup Z_m) \leq \mathrm{dim}(\mathcal{F}(\hat{\Pi})) - 2$, where $\mathrm{dim}(S)$ is the number of linearly independent vectors in $S$.
\end{theorem}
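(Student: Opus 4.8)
The plan is to reduce the existence of a non-trivial simplification to the existence of a single admissible ``value vector'', and then to a dimension count. Write $V = \mathrm{span}(Z_1 \cup \dots \cup Z_m)$, so $\dim V$ is exactly the quantity $\dim(Z_1\cup\dots\cup Z_m)$ in the theorem, and index the classes so their distinct $J_1$-values satisfy $c_1 < \dots < c_m$. First I would translate the defining clauses of a simplification into conditions on a candidate $\mathcal{R}_2$ through its returns $J_2(\pi) = \langle \mathcal{R}_2, \mathcal{F}(\pi)\rangle$. The equality-preservation clause $J_1(\pi)=J_1(\pi')\Rightarrow J_2(\pi)=J_2(\pi')$ says precisely that $J_2$ is constant on each level set $E_i$, i.e.\ $\langle \mathcal{R}_2, z\rangle = 0$ for all $z \in Z_i$ and all $i$, equivalently $\mathcal{R}_2 \perp V$. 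Under this constraint $J_2$ descends to a value vector $\mathbf v = (v_1,\dots,v_m)$ with $v_i = J_2(\pi_i)$, and the remaining clauses become: order-preservation $\Leftrightarrow$ $\mathbf v$ is monotone non-decreasing; non-triviality $\Leftrightarrow$ $\mathbf v$ is non-constant; and properness (a tie between distinct classes) $\Leftrightarrow$ $v_i = v_{i+1}$ for some $i$. Thus a non-trivial simplification exists iff the space of achievable value vectors contains one that is monotone, non-constant, and has an adjacent tie.

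Next I would compute the achievable value vectors. Since only the ordering and ties of $\mathbf v$ matter, I quotient out the common constant shift: let $L$ be the image of $\mathcal{R}_2\mapsto \mathbf v$ over $V^\perp$, and $\bar L$ the image of $\mathcal{R}_2 \mapsto (v_2 - v_1,\dots,v_m-v_1)$. Writing $q_i = \mathcal{F}(\pi_i) - \mathcal{F}(\pi_1)$ we have $v_i-v_1 = \langle \mathcal{R}_2, q_i\rangle$, so a rank/duality argument gives $\dim \bar L = \dim\big((\mathrm{span}\{q_2,\dots,q_m\}+V)/V\big)$. The key geometric fact is that every pairwise difference $\mathcal{F}(\pi)-\mathcal{F}(\pi')$ splits as a within-class part (in $V$) plus a difference of representatives (in $\mathrm{span}\{q_i\}$), so $\mathrm{span}\{q_i\}+V$ is the full span of pairwise differences, whose dimension is $\dim(\mathcal{F}(\hat{\Pi}))$. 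Hence $\dim\bar L = \dim(\mathcal{F}(\hat{\Pi})) - \dim V$. (Here $\dim(\mathcal{F}(\hat{\Pi}))$ must be read as the dimension of the affine span of the occupancies, equivalently the number of linearly independent difference vectors $\mathcal{F}(\pi)-\mathcal{F}(\pi')$; this is the correct bookkeeping because all occupancies lie on the hyperplane $\sum_{s,a}\mathcal{F}^\pi(s,a)=\tfrac{1}{1-\gamma}$, so $\dim L = \dim\bar L + 1$.)

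Finally I would prove that such a vector exists iff $\dim \bar L \ge 2$, i.e.\ $\dim V \le \dim(\mathcal{F}(\hat{\Pi})) - 2$. Two distinguished members of $L$ are $\mathbf 1$ (the constant vector, from the projection of a constant reward) and $\mathbf c = (c_1,\dots,c_m)$ (from $\mathcal{R}_2 = \mathcal{R}_1$, which lies in $V^\perp$ since $J_1$ is constant on classes), and $\mathbf c$ is strictly increasing. For necessity, note $\mathrm{span}(\mathbf 1,\mathbf c)$ contains only constant or strictly monotone vectors, so a monotone-non-constant-with-tie vector cannot lie in it; since that span is two-dimensional and contained in $L$, its existence forces $\dim L \ge 3$, i.e.\ $\dim\bar L \ge 2$. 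For sufficiency, choose $\mathbf u \in L \setminus \mathrm{span}(\mathbf 1,\mathbf c)$ (possible exactly when $\dim\bar L \ge 2$); it is non-constant, so some adjacent gap $u_{i+1}-u_i\neq 0$, and after possibly flipping its sign I set $\mathbf v(t) = \mathbf c - t\,\mathbf u$. Each adjacent gap $g_j(t) = (c_{j+1}-c_j) - t(u_{j+1}-u_j)$ is affine in $t$ and positive at $t=0$; letting $t^\star>0$ be the first value at which some gap vanishes, $\mathbf v(t^\star)$ has all gaps $\ge 0$ (monotone), at least one gap $=0$ (a tie), and is non-constant since $\mathbf u \notin \mathrm{span}(\mathbf 1,\mathbf c)$. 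Lifting $\mathbf v(t^\star)\in L$ back through $V^\perp$ yields the desired $\mathcal{R}_2$.

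I expect the sufficiency construction to be the main obstacle: merely having ``extra room'' in $\bar L$ does not directly supply an admissible value vector, and the delicate point is to create an adjacent tie while \emph{provably} preserving monotonicity and non-triviality, which the ``stop at the first vanishing gap'' argument is designed to guarantee. The dimension bookkeeping of the second step, especially the affine-versus-linear reading of $\dim(\mathcal{F}(\hat{\Pi}))$, is the other place where care is required.
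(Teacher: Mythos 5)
Your reduction is correct and, in my reading, tighter than the paper's own argument. The paper also works by moving continuously from $\vec{\mathcal{R}}_1$ toward $-\vec{\mathcal{R}}_1$ inside an ``equality-preserving space'' (the analogue of your $V^\perp$) and stopping at the first new equality, so the sufficiency mechanism is essentially your ``first vanishing gap''. But where you quotient all the way down to the value vectors $\mathbf v\in L$ and exhibit the two distinguished elements $\mathbf 1$ and $\mathbf c$ explicitly, the paper argues one level up, in a $D$-dimensional coordinatization of reward functions with $D=\dim(\mathcal{F}(\hat\Pi))$ read as the number of linearly independent occupancy vectors, and asserts that $\mathcal{R}_2$ is trivial on $\hat\Pi$ iff its coordinate vector is $\vec 0$. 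That assertion is where the two accounts part ways: because all occupancies lie on the affine hyperplane $\sum_{s,a}\mathcal{F}^\pi(s,a)=\tfrac{1}{1-\gamma}$, there is a whole line of nonzero coordinate vectors (the restrictions of constant rewards) that are trivial on $\hat\Pi$; this is exactly the $\mathbf 1$ direction your $L$ keeps track of. Consequently the paper's count of usable dimensions is one larger than yours, and its stated threshold $\dim(Z_1\cup\dots\cup Z_m)\le\dim(\mathcal{F}(\hat\Pi))-2$ corresponds, in your bookkeeping, to $\dim\bar L\ge 1$ rather than $\dim\bar L\ge 2$.

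Your version of the threshold (affine dimension minus two, i.e.\ linear-span dimension minus three) is the one consistent with the paper's own computations: a three-armed bandit with $J_1(\pi_1)=J_1(\pi_2)<J_1(\pi_3)$ has $\dim(Z_1\cup Z_2)=1=\dim(\mathcal{F}(\hat\Pi))-2$, yet its only candidate simplification equates all three policies and is trivial; likewise, in the paper's two-state example the representable orderings with one tie admit no non-trivial simplification even though they satisfy the stated inequality. So I see no gap in your argument, but you must state explicitly that you are proving the theorem with $\dim(\mathcal{F}(\hat\Pi))$ read as the dimension of the span of pairwise differences of occupancies, which differs by one from the definition given in the theorem statement. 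Two small items of polish: handle $m=1$ separately (there $\mathcal{R}_1$ is trivial and no simplification exists by definition), and note that $\mathbf 1$ and $\mathbf c$ are linearly independent only when $m\ge 2$, which is what makes your necessity direction ($L=\mathrm{span}(\mathbf 1,\mathbf c)$ when $\dim L=2$) go through.
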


The proof proceeds similarly to Theorem~\ref{thm:finite_unhackability}. However, in Theorem~\ref{thm:finite_unhackability} it was sufficient to show that there are no trivial reward functions along the path from $\mathcal{R}_1$ to $\mathcal{R}_3$, whereas here we additionally need that if $J(\pi) = J(\pi')$ then $J'(\pi) = J'(\pi')$ for all functions $\mathcal{R}_2$ on the path --- this is what the extra conditions ensure. 

Theorem~\ref{thm:finite_simplification} is opaque, but intuitively, the cases where $\mathcal{R}_1$ cannot be simplified are those where $\mathcal{R}_1$ imposes many different equality constraints that are difficult to satisfy simultaneously. 
We can think of $\mathrm{dim}(\mathcal{F}(\Pi))$ as measuring how diverse the behaviours of policies in policy set $\Pi$ are. 
Having a less diverse policy set means that a given policy ordering imposes fewer constraints on the reward function, creating more potential for simplification.
The technical conditions of this proof determine when the diversity of $\Pi$ is or is not sufficient to prohibit simplification, as measured by $\mathrm{dim}(Z_1 \cup \dots \cup Z_m)$.

Projecting $E_i$ to $Z_i$ simply moves these spaces to the origin, so that we can compare the directions in which they vary (i.e.\ their span).
By assumption, $E_i \cap E_j = \{\}$, but $\mathrm{span}(Z_i) \cap \mathrm{span}(Z_j)$ will include the origin, and may also contain linear subspaces of dimension greater than 0.
This is the case exactly when there are a pair of policies in $E_i$ and a pair of policies in $E_j$ that differ by the same visit counts,
for example, when the environment contains an obstacle that could be circumnavigated in several different ways (with an impact on visit counts, but no impact on reward), and the policies in $E_i$ and $E_j$ both need to circumnavigate it before doing something else. 
Roughly speaking, $\mathrm{dim}(Z_1 \cup \dots \cup Z_m)$ is large when either (i) we have very large and diverse sets of policies in $\hat{\Pi}$ that get the same reward according to $\mathcal{R}$, or (ii) we have a large number of different sets of policies that get the same reward according to $\mathcal{R}$, and where there are different kinds of diversity in the behaviour of the policies in each set.
There are also intuitive special cases of Theorem~\ref{thm:finite_simplification}. For example, as noted before, if $E_i$ is a singleton then $Z_i$ has no impact on $\mathrm{dim}(Z_1 \cup \dots \cup Z_m)$. This implies the following corollary:

\begin{corollary}
For any finite set of policies $\hat{\Pi}$, any environment, and any reward function $\mathcal{R}$, if $|\hat{\Pi}| \geq 2$ and $J(\pi) \neq J(\pi')$ for all $\pi,\pi' \in \hat{\Pi}$ then there is a non-trivial simplification of $\mathcal{R}$.
\end{corollary}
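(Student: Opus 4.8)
The plan is to obtain this corollary as an immediate application of Theorem~\ref{thm:finite_simplification}, by verifying that its dimension inequality holds automatically under the stated hypotheses. First I would form the partition $E_1,\dots,E_m$ of $\hat{\Pi}$ into classes of equal value. Since $J(\pi) \neq J(\pi')$ for every pair of distinct policies, no class can contain two policies, so each $E_i$ is a singleton and $m = |\hat{\Pi}|$. For a singleton class $E_i = \{\pi_i\}$, the set $Z_i$ is formed by subtracting $\mathcal{F}(\pi_i)$ from the single vector $\mathcal{F}(\pi_i)$, giving $Z_i = \{\mathbf{0}\}$. Hence $Z_1 \cup \dots \cup Z_m = \{\mathbf{0}\}$ and $\mathrm{dim}(Z_1 \cup \dots \cup Z_m) = 0$, which is exactly the ``singleton classes contribute nothing'' observation preceding the corollary.

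It then remains to show $\mathrm{dim}(\mathcal{F}(\hat{\Pi})) \geq 2$, so that the criterion reads $0 \leq \mathrm{dim}(\mathcal{F}(\hat{\Pi})) - 2$. To do this I would pick two distinct policies $\pi,\pi' \in \hat{\Pi}$ (available since $|\hat{\Pi}| \geq 2$). Because $J(\pi) = \langle \mathcal{R}, \mathcal{F}(\pi)\rangle$ and $J(\pi) \neq J(\pi')$, their visit-count vectors must differ. The step that carries the real content is upgrading ``distinct'' to ``linearly independent'': every visit-count vector satisfies $\sum_{s,a}\mathcal{F}^\pi(s,a) = 1/(1-\gamma)$, so all such vectors lie on a common affine hyperplane avoiding the origin, and no two distinct points on such a hyperplane can be scalar multiples of one another. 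Thus $\mathcal{F}(\pi)$ and $\mathcal{F}(\pi')$ are linearly independent and $\mathrm{dim}(\mathcal{F}(\hat{\Pi})) \geq 2$.

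Combining the two facts gives $\mathrm{dim}(Z_1 \cup \dots \cup Z_m) = 0 \leq \mathrm{dim}(\mathcal{F}(\hat{\Pi})) - 2$, so Theorem~\ref{thm:finite_simplification} produces a non-trivial simplification of $\mathcal{R}$. I expect the only non-routine point to be the linear-independence claim above; everything else is bookkeeping around the theorem statement. I would also state the hyperplane argument in the regime $\gamma < 1$, where the common coordinate sum $1/(1-\gamma)$ is finite and strictly positive and the visit counts are well defined.
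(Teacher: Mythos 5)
Your proposal is correct and follows essentially the same route as the paper's proof: singleton equivalence classes give $\mathrm{dim}(Z_1\cup\dots\cup Z_m)=0$, and two policies with distinct values give $\mathrm{dim}(\mathcal{F}(\hat{\Pi}))\geq 2$, so Theorem~\ref{thm:finite_simplification} applies. The one place you go beyond the paper is the linear-independence step: the paper jumps directly from $\mathcal{F}(\pi)\neq\mathcal{F}(\pi')$ to $\mathrm{dim}(\mathcal{F}(\hat{\Pi}))\geq 2$, whereas you correctly observe that distinctness alone does not rule out the vectors being scalar multiples, and close this gap via the fact that all visit-count vectors lie on the affine hyperplane $\sum_{s,a}\mathcal{F}^\pi(s,a)=1/(1-\gamma)$, which avoids the origin --- a worthwhile clarification of an implicit step in the paper's argument.
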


A natural question is whether any reward function is guaranteed to have a non-trivial simplification on the set of all deterministic policies.
As it turns out, this is not the case.
For concreteness, and to build intuition for this result, we examine the set of deterministic policies in a simple $MDP\setminus \mathcal{R}$ with $S = \{0, 1\}, A = \{0, 1\}, T(s, a) = a, I = \{0: 0.5, 1: 0.5 \}, \gamma = 0.5$. Denote $\pi_{ij}$ the policy that takes action $i$ from state 0 and action $j$ from state 1. There are exactly four deterministic policies.
We find that of the $4! = 24$ possible policy orderings, 12 are realizable via some reward function. In each of those 12 orderings, exactly two policies (of the six available pairs of policies in the ordering) can be set to equal value without resulting in the trivial reward function (\textit{which} pair can be equated depends on the ordering in consideration). Attempting to set three policies equal always results in the trivial reward simplification.

For example, given the ordering $\pi_{00} \leq \pi_{01} \leq \pi_{11} \leq \pi_{10}$,
the simplification $\pi_{00} = \pi_{01} < \pi_{11} < \pi_{10}$ is represented
by $R = \left[\begin{smallmatrix} 0 & 3 \\ 2 & 1\end{smallmatrix}\right]$, where $\mathcal{R}(s, a) = R[s, a]$: for example, here taking action 1 from state 0 gives reward $\mathcal{R}(0, 1) = 3$.
But there is no reward function representing a non-trivial simplification of this ordering with $\pi_{01} = \pi_{11}$.
We develop and release a software suite to compute these results.
Given an environment and a set of policies, it can calculate all policy orderings represented by some reward function. 
Also, for a given policy ordering, it can calculate all nontrivial simplifications and reward functions that represent them.
For a link to the repository, as well as a full exploration of these policies, orderings, and simplifications, see Appendix~\ref{sec:software}.

\subsection{Unhackability in Infinite Policy Sets}\label{sec:results_infinite}
The results in Section~\ref{sec:results_all} do not characterize unhackability for infinite policy sets that do not contain open sets. 
Here, we provide two examples of such policy sets; one of them admits unhackable reward pairs and the other does not.
Consider policies $A,B,C$, and reward functions $\mathcal{R}_1$ with $J_1(C) < J_1(B) < J_1(A)$ and $\mathcal{R}_2$ with $J_2(C) = J_2(B) < J_2(A)$.
Policy sets $\Pi_a = \{ A \} \cup \{ \lambda B + (1 - \lambda) C  : \lambda \in [0, 1]\}$ and $\Pi_b = \{ A \} \cup \{ \lambda B + (1 - \lambda) C  : \lambda \in [0, 1]\}$ are depicted in Figure~\ref{fig:both_triangle_examples}; the vertical axis represents policies' values according to $\mathcal{R}_1$ and $\mathcal{R}_2$.
For $\Pi_a$, $\mathcal{R}_2$ is a simplification of $\mathcal{R}_1$, but 
for $\Pi_b$, it is not, since $J_1(X) < J_1(Y)$ and $J_2(X) > J_2(Y)$.

\begin{figure}[h]
    \centering
    \begin{subfigure}[b]{0.32\textwidth}
        \centering
        \includegraphics[width=\textwidth]{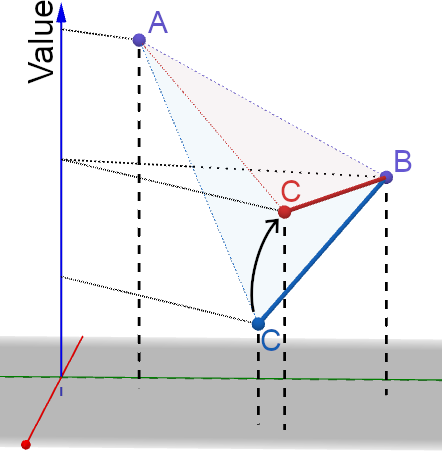}
        \caption{}
        \label{fig:one_sided_triangle}
    \end{subfigure}
    \qquad
    \begin{subfigure}[b]{0.32\textwidth}
        \centering
        \includegraphics[width=\textwidth]{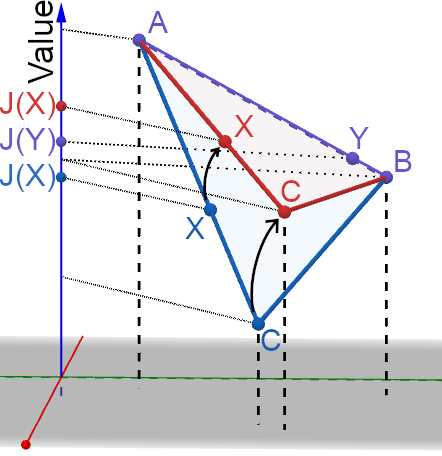}
        \caption{}
        \label{fig:three_sided_triangle}
    \end{subfigure}
    \caption{
    Infinite policy sets that do not contain open sets sometimes allow simplification (a), but not always (b).
    Points A, B, C represent deterministic policies, while the bold lines between them represent stochastic policies. %
    The y-axis gives the values of the policies according to reward functions $\color{NavyBlue} \mathcal{R}_1$ and $\color{Mahogany} \mathcal{R}_2$.
    We attempt to simplify $\color{NavyBlue} \mathcal{R}_1$ by rotating the reward function such that $\color{Mahogany} J_2(B) = J_2(C)$; in the figure, we instead (equivalently) rotate the triangle along the AB axis, leading to the red triangle. In (a), $\color{Mahogany} \mathcal{R}_2$ simplifies $\color{NavyBlue} \mathcal{R}_1$, setting all policies along the BC segment equal in value (but still lower than A). In (b), $\color{Mahogany} \mathcal{R}_2$ swaps the relative value of policies X and Y ($\color{NavyBlue}{J_1(X)} < \color{NavyBlue}{J_1(Y)} = \color{Mahogany}{J_2(Y)} < \color{Mahogany}{J_2(X)}$) and so does not simplify $\color{NavyBlue} \mathcal{R}_1$.
    }
    \label{fig:both_triangle_examples}
\end{figure}

\section{Discussion}

We reflect on our results and identify limitations in Section~\ref{sec:limitations}.
In Section~\ref{sec:implications}, we discuss how our work can inform discussions about the appropriateness, potential risks, and limitations of using of reward functions as specifications of desired behavior.

\subsection{Limitations}
\label{sec:limitations}
Our work has a number of limitations.
We have only considered finite MDPs and Markov reward functions, leaving more general environments for future work.
While we characterized hackability and simplification for finite policy sets, the conditions for simplification are somewhat opaque, and our characterization of infinite policy sets remains incomplete.

As previously discussed, our definition of hackability is strict, arguably too strict.
Nonetheless, we believe that understanding the consequences of this strict definition is an important starting point for further theoretical work in this area. 

The main issue with the strictness of our definition has to do with the symmetric nature of hackability.  %
The existence of complex behaviors that yield low proxy reward and high true reward is much less concerning than the reverse, as these behaviors are unlikely to be discovered while optimizing the proxy.
For example, it is very unlikely that our agent would solve climate change in the course of learning how to wash dishes.
Note that the existence of \textit{simple} behaviors with low proxy reward and high true reward \textit{is} concerning; 
these could arise early in training, leading us to trust the proxy, only to later see the true reward decrease as the proxy is further optimized.
To account for this issue, future work should explore more  realistic assumptions about the probability of encountering a given sequence of policies when optimizing the proxy, and measure hackability in proportion to this probability.

We could allow for approximate unhackability by only considering pairs of policies ranked differently by the true and proxy reward functions as evidence of hacking iff their value according to the true reward function differs by more than some $\varepsilon$.
Probabilistic unhackability could be defined by looking at the number of misordered policies; this would seem to require making assumptions about the probability of encountering a given policy when optimizing the proxy.

Finally, while unhackability is a guarantee that no hacking will occur, \textit{hackability} is far from a guarantee of hacking.
Extensive empirical work is necessary to better understand the factors that influence the occurrence and severity of reward hacking in practice.

\subsection{Implications}\label{sec:implications}

How should we specify our preferences for AI systems' behavior?
And how detailed a specification is required to achieve a good outcome?
In reinforcement learning, the goal of maximizing (some) reward function is often taken for granted, but a number of authors have expressed reservations about this approach \citep{gabriel2020artificial, Dobbe2021hard, Hadfield2016Cooperative, hadfield2017inverse, bostrom2014superintelligence}.
Our work has several implications for this discussion, although we caution against drawing any strong conclusions due to the limitations mentioned in Section~\ref{sec:limitations}.

One source of confusion and disagreement is the role of the reward function; it is variously considered as a means of specifying a task \citep{leike2018scalable} or encoding broad human values \citep{Dewey2011learning}; such distinctions are discussed by \citet{christiano_narrow_alignment} and \citet{gabriel2020artificial}.
We might hope to use Markov reward functions to specify narrow tasks without risking behavior that goes against our broad values.
However, if we consider the ``narrow task'' reward function as a proxy for the true ``broad values'' reward function, our results indicate that this is not possible: these two reward functions will invariably be hackable.
Such reasoning suggests that reward functions must instead encode broad human values, or risk being hacked.
This seems challenging, perhaps intractably so, indicating that alternatives to reward optimization may be more promising.
Potential alternatives include imitation learning \citep{ross2011reduction}, 
constrained RL \citep{Csaba2020con}, quantilizers \citep{taylor2016quantilizers}, and incentive management \citep{everitt2019understanding}.

Scholars have also criticized the assumption that human values can be encoded as rewards \citep{Dobbe2021hard}, and challenged the use of metrics more broadly \citep{oneil2016weapons,thomas2022reliance}, citing Goodhart's Law \citep{Manheim2018Categorizing, goodhart1984problems}.
A concern more specific to the optimization of reward functions is power-seeking \citep{turner2021optimalneurips, bostrom2012superintelligent, omohundro2008basic}.
\citet{turner2021optimalneurips} prove that optimal policies tend to seek power in most MDPs and for most reward functions. 
Such behavior could lead to human disempowerment; for instance, an AI system might disable its off-switch \citep{Hadfield2016the}.
\citet{bostrom2014superintelligence} and others have argued that power-seeking makes even slight misspecification of rewards potentially catastrophic, although this has yet to be rigorously established.

Despite such concerns, approaches to specification based on learning reward functions %
remain popular \citep{fu2017learning, Stiennon2020Learning, nakano2021webgpt}.
So far, reward hacking has usually been avoidable in practice, although some care must be taken \citep{Stiennon2020Learning}.
Proponents of such approaches have emphasized the importance of learning a reward model in order to exceed human performance and generalize to new settings \citep{brown2020better, leike2018scalable}.
But our work indicates that such learned rewards are almost certainly hackable, and so cannot be safely optimized.
Thus we recommend viewing such approaches as a means of learning a policy in a safe and controlled setting, which should then be validated before being deployed.

\section{Conclusion}
Our work begins the formal study of reward hacking in reinforcement learning.
We formally define hackability and simplification of reward functions, and show conditions for the (non-)existence of non-trivial examples of each.
We find that unhackability is quite a strict condition, as the set of all policies never contains non-trivial unhackable pairs of reward functions.
Thus in practice, reward hacking must be prevented by limiting the set of possible policies, or controlling (e.g.\ limiting) optimization.
Alternatively, we could %
pursue approaches not based on optimizing reward functions.

\bibliographystyle{apalike} 
\bibliography{refs}

\newpage

\section*{Checklist}

\begin{enumerate}

\item For all authors...
\begin{enumerate}
  \item Do the main claims made in the abstract and introduction accurately reflect the paper's contributions and scope?
    \answerYes{}
  \item Did you describe the limitations of your work?
    \answerYes{}
  \item Did you discuss any potential negative societal impacts of your work?
    \answerYes{See Section~\ref{sec:implications}}
  \item Have you read the ethics review guidelines and ensured that your paper conforms to them?
    \answerYes{}
\end{enumerate}

\item If you are including theoretical results...
\begin{enumerate}
  \item Did you state the full set of assumptions of all theoretical results?
    \answerYes{}
        \item Did you include complete proofs of all theoretical results?
    \answerYes{Some of the proofs are in the Appendix.}
\end{enumerate}

\item If you ran experiments...
\begin{enumerate}
  \item Did you include the code, data, and instructions needed to reproduce the main experimental results (either in the supplemental material or as a URL)?
    \answerYes{The code and instructions for running it are available in the supplementary materials. The code does not use any datasets.}
  \item Did you specify all the training details (e.g., data splits, hyperparameters, how they were chosen)?
    \answerNA{We do not perform model training in this work.}
        \item Did you report error bars (e.g., with respect to the random seed after running experiments multiple times)?
    \answerNA{}
        \item Did you include the total amount of compute and the type of resources used (e.g., type of GPUs, internal cluster, or cloud provider)?
    \answerNA{No compute beyond a personal laptop (with integrated graphics) was used.}
\end{enumerate}

\item If you are using existing assets (e.g., code, data, models) or curating/releasing new assets...
\begin{enumerate}
  \item If your work uses existing assets, did you cite the creators?
    \answerNA{The codebase was written from scratch.}
  \item Did you mention the license of the assets?
    \answerNA{}
  \item Did you include any new assets either in the supplemental material or as a URL?
    \answerYes{The codebase is available in the supplemental material.}
  \item Did you discuss whether and how consent was obtained from people whose data you're using/curating?
    \answerNA{}
  \item Did you discuss whether the data you are using/curating contains personally identifiable information or offensive content?
    \answerNA{}
\end{enumerate}

\item If you used crowdsourcing or conducted research with human subjects...
\begin{enumerate}
  \item Did you include the full text of instructions given to participants and screenshots, if applicable?
    \answerNA{}
  \item Did you describe any potential participant risks, with links to Institutional Review Board (IRB) approvals, if applicable?
    \answerNA{}
  \item Did you include the estimated hourly wage paid to participants and the total amount spent on participant compensation?
    \answerNA{}
\end{enumerate}

\end{enumerate}

\newpage

\appendix

\setcounter{proposition}{0}
\setcounter{theorem}{0}
\setcounter{lemma}{0}
\setcounter{corollary}{0}
\setcounter{definition}{0}

\section{Overview}
Section~\ref{sec:proofs} contains proofs of the main theoretical results.
Section~\ref{sec:examples} expands on examples given in the main text.
Section~\ref{sec:hackability_diagram} presents an unhackability diagram for a generic set of three policies $a, b, c$; Section~\ref{sec:simplification_diagram} shows a simplification diagram of the same policies.

\section{Proofs} \label{sec:proofs}

Before proving our results, we restate assumptions and definitions.
First, recall the preliminaries from Section 4.1, and in particular, that we use $\mathcal{F}: \Pi \rightarrow \mathbb{R}^{|S||A|}$ to denote the embedding of policies into Euclidean space via their discounted state-action visit counts, i.e.;
$$
\mathcal{F}(\pi)[s,a] = \sum_{t=0}^\infty \gamma^t \mathbb{P}(S_t = s, A_t = a).
$$
Given a reward function $\mathcal{R}$, let $\vec{\mathcal{R}} \in \mathbb{R}^{|S||A|}$ be the vector where $\vec{\mathcal{R}}[s,a] = \mathbb{E}_{S' \sim T(s,a)}[\mathcal{R}(s,a,S')]$. Note that $J(\pi) = \mathcal{F}(\pi)\cdot \vec{\mathcal{R}}$.

We say $\mathcal{R}_1$ and $\mathcal{R}_2$ are \textbf{equivalent} on a set of policies $\Pi$ if $J_1$ and $J_2$ induce the same ordering of $\Pi$, and that $\mathcal{R}$ is \textbf{trivial} on $\Pi$ if $J(\pi) = J(\pi')$ for all $\pi,\pi' \in \Pi$. 
We also have the following definitions from Sections 4 and 5:

\begin{definition}\label{def:unhackable}
A pair of reward functions $\mathcal{R}_1$, $\mathcal{R}_2$ are \textbf{hackable} relative to policy set $\Pi$ and an environment $(S,A,T,I,\underline{\hspace*{0.3cm}},\gamma)$ if 
there exist $\pi,\pi' \in \Pi$ such that  
$$
J_1(\pi) < J_1(\pi') \And J_2(\pi) > J_2(\pi'),
$$
else they are \textbf{unhackable}.
\end{definition}

\begin{definition}
$\mathcal{R}_2$ is a \textbf{simplification} of $\mathcal{R}_1$ relative to policy set $\Pi$ if for all $\pi,\pi' \in \Pi$, 
$$
J_1(\pi) < J_1(\pi') \implies J_2(\pi) \leq J_2(\pi') 
\And 
J_1(\pi) = J_1(\pi') \implies J_2(\pi) = J_2(\pi')
$$
and there exist $\pi,\pi' \in \Pi$ such that $J_2(\pi) = J_2(\pi')$ but $J_1(\pi) \neq J_1(\pi')$. Moreover, if $\mathcal{R}_2$ is trivial
then we say that this is a \textbf{trivial simplification}.
\end{definition}

Note that these definitions only depend on the policy orderings associated with $\mathcal{R}_2$ and $\mathcal{R}_1$, and so we can (and do) also speak of (ordered) pairs of policy orderings being simplifications or hackable. We also make use of the following definitions:

\begin{definition}
A (stationary) policy $\pi$ is $\varepsilon$-suboptimal if $J(\pi) \geq J(\pi^\star) - \varepsilon$, where $\varepsilon > 0$
\end{definition}

\begin{definition}
A (stationary) policy $\pi$ is $\delta$-deterministic if $\forall s \in S \; \exists a \in A: \mathbb{P}(\pi(s) = a) \geq \delta$, where $\delta < 1$.
\end{definition}

\subsection{Non-trivial Unhackability Requires Restricting the Policy Set}\label{app:results_all}

Formally, a set of (stationary) policies $\dot{\Pi}$ is \textbf{open} if $\mathcal{V}(\dot{\Pi})$ is open in the smallest affine space that contains $\mathcal{V}(\Pi)$, where $\Pi$ is the set of all stationary policies. Note that this space is $|S|(|A|-1)$-dimensional, since all action probabilities sum to 1.

We require two more propositions for the proof of this lemma.

\begin{proposition}\label{prop:injectivity}
If $\dot{\Pi}$ is open then $\mathcal{F}$ is injective on $\dot{\Pi}$.
\end{proposition}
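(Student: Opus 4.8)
The plan is to show that on an open set $\dot\Pi$ the visit-count embedding $\mathcal{F}$ admits an explicit inverse, so that distinct policies must have distinct visit counts. First I would argue that openness forces every policy in $\dot\Pi$ to be \emph{fully stochastic}, i.e.\ $\pi(a\mid s) > 0$ for every $s,a$. Indeed, $\mathcal{V}(\Pi)$ is the product of probability simplices $\prod_{s}\Delta(A)$, whose affine hull is $\{x : \sum_a x[s,a] = 1 \ \forall s\}$. Since $\mathcal{V}(\dot\Pi)$ is open in this affine hull and contained in $\mathcal{V}(\Pi)$, each of its points must lie in the relative interior of the simplex product: a point with some coordinate equal to $0$ lies on the boundary and cannot have a neighbourhood (within the affine hull) contained in $\prod_s\Delta(A)$. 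Hence every coordinate $\pi(a\mid s)$ is strictly positive for $\pi \in \dot\Pi$.

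Next I would establish the key positivity fact: for such a fully stochastic $\pi$, the discounted state-visitation $d^\pi(s) \doteq \sum_{a}\mathcal{F}(\pi)[s,a] = \sum_{t}\gamma^t\,\mathbb{P}(S_t = s)$ is strictly positive for every state $s$. This uses the standing assumption that all states are reachable: for any $s$ there is a finite action sequence leading from the support of $I$ to $s$ with positive probability under $T$, and because $\pi$ assigns positive probability to every action, this path is realised with positive probability under $\pi$, so $\mathbb{P}(S_t = s) > 0$ for some $t$ and therefore $d^\pi(s) > 0$.

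Finally, I would invert the embedding using the identity $\mathcal{F}(\pi)[s,a] = d^\pi(s)\,\pi(a\mid s)$, where $d^\pi(s) = \sum_{a'}\mathcal{F}(\pi)[s,a']$ is determined by $\mathcal{F}(\pi)$ alone. This yields $\pi(a\mid s) = \mathcal{F}(\pi)[s,a]\,/\,d^\pi(s)$ whenever $d^\pi(s) > 0$. Consequently, if $\mathcal{F}(\pi) = \mathcal{F}(\pi')$ for $\pi,\pi'\in\dot\Pi$, then $d^\pi(s) = d^{\pi'}(s) > 0$ for all $s$, and therefore $\pi(a\mid s) = \pi'(a\mid s)$ for all $s,a$, i.e.\ $\pi = \pi'$, which is exactly injectivity.

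I expect the main obstacle to be the positivity step $d^\pi(s) > 0$. The factorisation $\mathcal{F}(\pi)[s,a] = d^\pi(s)\,\pi(a\mid s)$ is routine, but combining reachability with full stochasticity to rule out any state of zero discounted visitation is where the argument's substance lies, and it is precisely here (through the relative-interior claim of the first step) that the openness hypothesis is genuinely used. A minor technical point worth isolating is the degenerate case $\gamma = 0$, where $d^\pi(s)$ reduces to $I(s)$ and the reachability-at-time-$t\ge 1$ argument contributes nothing; this case either relies on $\gamma > 0$ or should be treated separately, but it does not affect the structure of the proof.
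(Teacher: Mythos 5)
Your proof is correct and follows essentially the same route as the paper's: openness forces full stochasticity, which together with reachability gives strictly positive discounted state visitation, and the factorisation $\mathcal{F}(\pi)[s,a] = d^\pi(s)\,\pi(a\mid s)$ then recovers $\pi$ from $\mathcal{F}(\pi)$. Your aside about the $\gamma = 0$ degenerate case flags an edge case the paper's own proof also glosses over, but it does not affect the substance of the argument.
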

\begin{proof}
First note that, since $\pi(a \mid s) \geq 0$, we have that if $\dot{\Pi}$ is open then $\pi(a \mid s) > 0$ for all $s,a$ for all $\pi \in \dot{\Pi}$. In other words, all policies in $\dot{\Pi}$ take each action with positive probability in each state. 

Now suppose $\mathcal{F}(\pi) = \mathcal{F}(\pi')$ for some $\pi,\pi' \in \tilde{\Pi}$. Next, define $w_\pi$ as
$$
w_\pi(s) = \sum_{t=0}^\infty \gamma^t \mathbb{P}_{\tau \sim \pi}(S_t = s).
$$
Note that if $\mathcal{F}(\pi) = \mathcal{F}(\pi')$ then $w_\pi = w_{\pi'}$, and moreover that
$$
\mathcal{F}(\pi)[s,a] = w_\pi(s)\pi(a \mid s).
$$
Next, since $\pi$ takes each action with positive probability in each state, we have that $\pi$ visits every state with positive probability. This implies that $w_{\pi}(s) \neq 0$ for all $s$, which means that we can express $\pi$ as
$$
\pi(a \mid s) = \frac{\mathcal{F}(\pi)[s,a]}{w_\pi(s)}.
$$
This means that if $\mathcal{F}(\pi) = \mathcal{F}(\pi')$ for some $\pi,\pi' \in \tilde{\Pi}$ then $\pi = \pi'$.
\end{proof}

Note that $\mathcal{F}$ is \emph{not} injective on $\Pi$; if there is some state $s$ that $\pi$ reaches with probability $0$, then we can alter the behaviour of $\pi$ at $s$ without changing $\mathcal{F}(\pi)$. But every policy in an open policy set $\dot{\Pi}$ visits every state with positive probability, which then makes $\mathcal{F}$ injective. In fact, Proposition~\ref{prop:injectivity} straightforwardly generalises to the set of all policies that visit all states with positive probability (although this will not be important for our purposes).%

\begin{proposition}\label{prop:image_dimension}
$\mathrm{Im}(\mathcal{F})$ is located in an affine subspace with $|S|(|A|-1)$ dimensions.
\end{proposition}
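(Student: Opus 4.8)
The plan is to show that the image of $\mathcal{F}$ lies in an affine subspace of dimension at most $|S|(|A|-1)$ by exhibiting $|S|$ independent linear constraints that every visit-count vector must satisfy. The natural constraints come from the Bellman flow equations for discounted state-action occupancies. First I would recall that, writing $w_\pi(s) = \sum_{t=0}^\infty \gamma^t \mathbb{P}_{\tau\sim\pi}(S_t = s)$ for the discounted state-visit counts, we have the identity $\mathcal{F}(\pi)[s,a] = w_\pi(s)\,\pi(a\mid s)$, and hence $\sum_a \mathcal{F}(\pi)[s,a] = w_\pi(s)$ for every state $s$, since $\sum_a \pi(a\mid s) = 1$.

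Next I would write down the flow balance condition that the state-occupancy measure $w_\pi$ must satisfy. Summing over time and using the transition dynamics, one obtains, for every state $s'$,
$$
w_\pi(s') = I(s') + \gamma \sum_{s,a} w_\pi(s)\,\pi(a\mid s)\, T(s'\mid s,a).
$$
Substituting $w_\pi(s) = \sum_a \mathcal{F}(\pi)[s,a]$ on the left and $w_\pi(s)\pi(a\mid s) = \mathcal{F}(\pi)[s,a]$ inside the sum on the right, this becomes a constraint expressed purely in terms of the visit counts:
$$
\sum_{a'} \mathcal{F}(\pi)[s',a'] = I(s') + \gamma \sum_{s,a} \mathcal{F}(\pi)[s,a]\, T(s'\mid s,a).
$$
This is a system of $|S|$ affine equations (one per state $s'$) that $\mathcal{F}(\pi)$ satisfies for every policy $\pi$. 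Since each equation is affine in the $|S||A|$-dimensional vector $\mathcal{F}(\pi)$, the image $\mathrm{Im}(\mathcal{F})$ is contained in the affine solution set of this system.

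It then remains to argue that these $|S|$ constraints are linearly independent, so that they cut the ambient $|S||A|$-dimensional space down to dimension exactly $|S||A| - |S| = |S|(|A|-1)$. I would do this by examining the coefficient matrix of the homogeneous part of the system, rewritten as $\sum_{a'}\mathcal{F}(\pi)[s',a'] - \gamma\sum_{s,a}\mathcal{F}(\pi)[s,a]T(s'\mid s,a) = I(s')$; grouping the left-hand side by the variable $\mathcal{F}(\pi)[s,a]$, the coefficient of $\mathcal{F}(\pi)[s,a]$ in the equation for $s'$ is $\mathbbm{1}(s = s') - \gamma\, T(s'\mid s,a)$, so the relevant $|S| \times |S||A|$ matrix has a block structure whose $|S|\times|S|$ sub-blocks (indexed by $a$) are $I - \gamma P_a$, where $P_a$ is the transition matrix under action $a$. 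The main obstacle is establishing full row rank $|S|$: I expect the cleanest route is to note that for $\gamma < 1$ each block $I - \gamma P_a$ is already invertible (as $P_a$ is stochastic and $\gamma < 1$), which immediately forces the full row rank, while the boundary case $\gamma = 1$ requires a separate argument appealing to the assumption that all states are reachable (so the flow equations remain independent). This rank argument, rather than the derivation of the constraints themselves, is where the real work lies, and it is the part I would treat most carefully.
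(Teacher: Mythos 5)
Your proof is correct, but it takes a genuinely different route from the paper's. You work on the primal side: you exhibit the $|S|$ Bellman flow constraints $\sum_{a'}\mathcal{F}(\pi)[s',a'] = I(s') + \gamma\sum_{s,a}T(s'\mid s,a)\,\mathcal{F}(\pi)[s,a]$ and establish their independence via the invertibility of $I - \gamma P_a$ for $\gamma<1$. The paper instead argues by duality through potential shaping: shaping $\mathcal{R}$ by any potential $\Phi$ with $\mathbb{E}_{S_0\sim I}[\Phi(S_0)]=0$ leaves $J$ unchanged on every policy, so $J$ determines $\vec{\mathcal{R}}$ only modulo $|S|-1$ free directions, which bounds the number of linearly independent vectors in $\mathrm{Im}(\mathcal{F})$ by $|S|(|A|-1)+1$; combining this with the identity $\sum_{s,a}\mathcal{F}(\pi)[s,a]=\tfrac{1}{1-\gamma}$ (so the affine hull misses the origin) drops the affine dimension to $|S|(|A|-1)$. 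The two arguments are in fact dual descriptions of the same subspace: the homogeneous part of your constraint indexed by $s'$ is precisely the shaping direction for the indicator potential $\Phi=\mathbbm{1}_{s'}$, and summing your $|S|$ constraints over $s'$ recovers the paper's $\ell_1$ identity. What your version buys is an explicit, standard description of the containing affine subspace (the occupancy-measure polytope's affine hull) together with a clean independence argument; what the paper's version buys is avoiding the rank computation entirely by citing a known invariance, at the cost of a slightly more indirect dimension count. Your caveat about $\gamma=1$ is not a real gap relative to the paper: the visit counts are only finite for $\gamma<1$ in this continuing setting, and the paper's own proof likewise uses $\tfrac{1}{1-\gamma}$.
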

\begin{proof}
To show that $\mathrm{Im}(\mathcal{F})$ is located in an affine subspace with $|S|(|A|-1)$ dimensions, first note that
$$
\sum_{s,a} \mathcal{F}(\pi)[s,a] = \sum_{t=0}^\infty \gamma^t = \frac{1}{1-\gamma}
$$
for all $\pi$. That is, $\mathrm{Im}(\mathcal{F})$ is located in an affine space of points with a fixed $\ell_1$-norm, and this space does not contain the origin.

Next, note that $J(\pi) = \mathcal{F}(\pi) \cdot \vec{\mathcal{R}}$. This means that if knowing the value of $J$ for all $\pi$ determines $\vec{\mathcal{R}}$ modulo at least $n$ free variables, then $\mathrm{Im}(\mathcal{F})$ contains at most $|S||A|-n$ linearly independent vectors.
Next recall \emph{potential shaping} \citep{ng1999policy}. In brief, given a reward function $\mathcal{R}$ and a \emph{potential function} $\Phi : S \rightarrow \mathbb{R}$, we can define a \emph{shaped reward function} $\mathcal{R}'$ by
$$
\mathcal{R}'(s,a,s') = \mathcal{R}(s,a,s') + \gamma\Phi(s') - \Phi(s),
$$
or, alternatively, if we wish $\mathcal{R}'$ to be defined over the domain $S \times A$,
$$
\mathcal{R}'(s,a) = \mathcal{R}(s,a) + \gamma \mathbb{E}_{S' \sim T(s,a)}[\Phi(S')] - \Phi(s).
$$
In either case, it is possible to show that if $\mathcal{R}'$ is produced by shaping $\mathcal{R}$ with $\Phi$, and $\mathbb{E}_{S_0 \sim I}\left[\Phi(S_0)\right] = 0$, then $J(\pi) = J'(\pi)$ for all $\pi$. This means that knowing the value of $J(\pi)$ for all $\pi$ determines $\vec{\mathcal{R}}$ modulo at least $|S|-1$ free variables, which means that $\mathrm{Im}(\mathcal{F})$ contains at most $|S||A| - (|S| - 1) = |S|(|A|-1) + 1$ linearly independent vectors. Since the smallest affine space that contains $\mathrm{Im}(\mathcal{F})$ does \emph{not} contain the origin, this in turn means that $\mathrm{Im}(\mathcal{F})$ is located in an affine subspace with $= |S|(|A|-1) + 1 - 1 = |S|(|A|-1)$ dimensions.
\end{proof}

\begin{lemma}%
In any $MDP \setminus \mathcal{R}$, if $\dot{\Pi}$ is an open set of policies, then
$\mathcal{F}(\dot{\Pi})$ is open in $\mathbb{R}^{|S|(|A|-1)}$, and $\mathcal{F}$ is a homeomorphism between $\mathcal{V}(\dot{\Pi})$ and $\mathcal{F}(\dot{\Pi})$.
\end{lemma}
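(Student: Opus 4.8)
The plan is to realize $\mathcal{F}$ as a continuous injection between two real affine spaces of the same dimension $n := |S|(|A|-1)$, and then invoke Brouwer's invariance of domain, which turns "continuous $+$ injective $+$ equidimensional" directly into "open map $+$ homeomorphism onto image."

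First I would fix the two ambient spaces. By the definition of openness, $\mathcal{V}(\dot{\Pi})$ is open in the smallest affine space $\mathcal{A}_{\mathrm{in}}$ containing $\mathcal{V}(\Pi)$; since the action probabilities sum to $1$ in each state, $\mathcal{A}_{\mathrm{in}}$ has dimension $n$, and I identify it with $\mathbb{R}^n$. By Proposition~\ref{prop:image_dimension}, $\mathrm{Im}(\mathcal{F})$ lies in an affine space $\mathcal{A}_{\mathrm{out}}$ of the same dimension $n$, which I likewise identify with $\mathbb{R}^n$. Using that $\mathcal{V}$ is a bijection from policies to their action-probability vectors, let $\Psi := \mathcal{F}\circ \mathcal{V}^{-1} : \mathcal{V}(\dot{\Pi}) \to \mathbb{R}^n$ be the map $\mathcal{F}$ read through these identifications.

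Next I would verify the two hypotheses of invariance of domain for $\Psi$. For continuity, the discounted state occupancy $w_\pi$ solves $w_\pi^\top = I^\top(\mathrm{Id}-\gamma P^\pi)^{-1}$, where the state-transition matrix $P^\pi(s'\mid s)=\sum_a \pi(a\mid s)\,T(s'\mid s,a)$ is linear in the coordinates of $\mathcal{V}(\pi)$ and $\mathrm{Id}-\gamma P^\pi$ is invertible; since matrix inversion is continuous where defined and $\mathcal{F}(\pi)[s,a]=w_\pi(s)\,\pi(a\mid s)$, the map $\Psi$ is continuous (routine for $\gamma<1$, and extending to the boundary under the paper's finiteness and reachability assumptions). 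Injectivity is exactly Proposition~\ref{prop:injectivity}. Invariance of domain then says a continuous injection from an open subset of $\mathbb{R}^n$ into $\mathbb{R}^n$ is an open map onto an open image and a homeomorphism onto that image. Hence $\mathcal{F}(\dot{\Pi})=\Psi(\mathcal{V}(\dot{\Pi}))$ is open in $\mathbb{R}^n$, and $\Psi$ — equivalently $\mathcal{F}$ restricted to $\mathcal{V}(\dot{\Pi})$ — is a homeomorphism onto $\mathcal{F}(\dot{\Pi})$.

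The main obstacle is not topological but dimensional: invariance of domain fails if the target has strictly larger dimension, so the entire argument hinges on the source and image living in affine spaces of the \emph{same} dimension $n$. This is precisely the role of Proposition~\ref{prop:image_dimension} (which pins $\mathrm{Im}(\mathcal{F})$ into an $n$-dimensional affine space via the fixed $\ell_1$-norm and the potential-shaping degeneracies) paired with the injectivity of Proposition~\ref{prop:injectivity}; continuity of $\mathcal{F}$ is standard but deserves an explicit word for the degenerate cases. An alternative to invoking invariance of domain would be to show $\Psi$ is smooth with everywhere-nonsingular Jacobian and apply the inverse function theorem, but the topological route is cleaner since it needs only continuity and injectivity.
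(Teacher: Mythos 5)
Your proposal is correct and follows essentially the same route as the paper's own proof: both identify $\mathcal{V}(\dot{\Pi})$ and $\mathrm{Im}(\mathcal{F})$ with open subsets of $\mathbb{R}^{|S|(|A|-1)}$ (via Proposition~\ref{prop:image_dimension}), use Proposition~\ref{prop:injectivity} for injectivity, and then apply invariance of domain. Your explicit continuity argument via $w_\pi^\top = I^\top(\mathrm{Id}-\gamma P^\pi)^{-1}$ is a welcome elaboration of a step the paper merely asserts.
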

\begin{proof}
By the Invariance of Domain Theorem, if 
\begin{enumerate}
    \item $U$ is an open subset of $\mathbb{R}^n$, and
    \item $f : U \rightarrow \mathbb{R}^n$ is an injective continuous map,
\end{enumerate}
then $f(U)$ is open in $\mathbb{R}^n$ and $f$ is a homeomorphism between $U$ and $f(U)$. We will show that $\mathcal{F}$ and $\dot{\Pi}$ satisfy the requirements of this theorem.

We begin by noting that $\dot{\Pi}$ can be represented as a set of points in $\mathbb{R}^{|S|(|A|-1)}$. 
First, project $\dot{\Pi}$ into $\mathbb{R}^{|S||A|}$ via $\mathcal{V}$. Next, since $\sum_{a \in A} \pi(a \mid s) = 1$ for all $s$, $\mathrm{Im}(\mathcal{V})$ is in fact located in an affine subspace with $|S|(|A|-1)$ dimensions, which directly gives a representation in $\mathbb{R}^{|S|(|A|-1)}$. Concretely, this represents each policy $\pi$ as a vector $\mathcal{V}(\pi)$ with one entry containing the value $\pi(a \mid s)$ for each state-action pair $s,a$, but with one action left out for each state, since this value can be determined from the remaining values.
We will assume that $\dot{\Pi}$ is embedded in $\mathbb{R}^{|S|(|A|-1)}$ in this way.

By assumption, $\mathcal{V}(\dot{\Pi})$ is an open set in $\mathbb{R}^{|S|(|A|-1)}$. Moreover, by Proposition~\ref{prop:image_dimension}, we have that $\mathcal{F}$ is (isomorphic to) a mapping $\dot{\Pi} \rightarrow \mathbb{R}^{|S|(|A|-1)}$. By Proposition~\ref{prop:injectivity}, we have that $\mathcal{F}$ is injective on $\dot{\Pi}$. Finally, $\mathcal{F}$ is continuous; this can be seen from its definition. We can therefore apply the Invariance of Domain Theorem, and obtain that $\mathcal{F}(\dot{\Pi})$ is open in $\mathbb{R}^{|S|(|A|-1)}$, and that $\mathcal{F}$ is a homeomorphism between $\mathcal{V}(\dot{\Pi})$ and $\mathcal{F}(\dot{\Pi})$.
\end{proof}

\begin{figure*}[ht]
\centering
\includegraphics[width=.8\textwidth]{./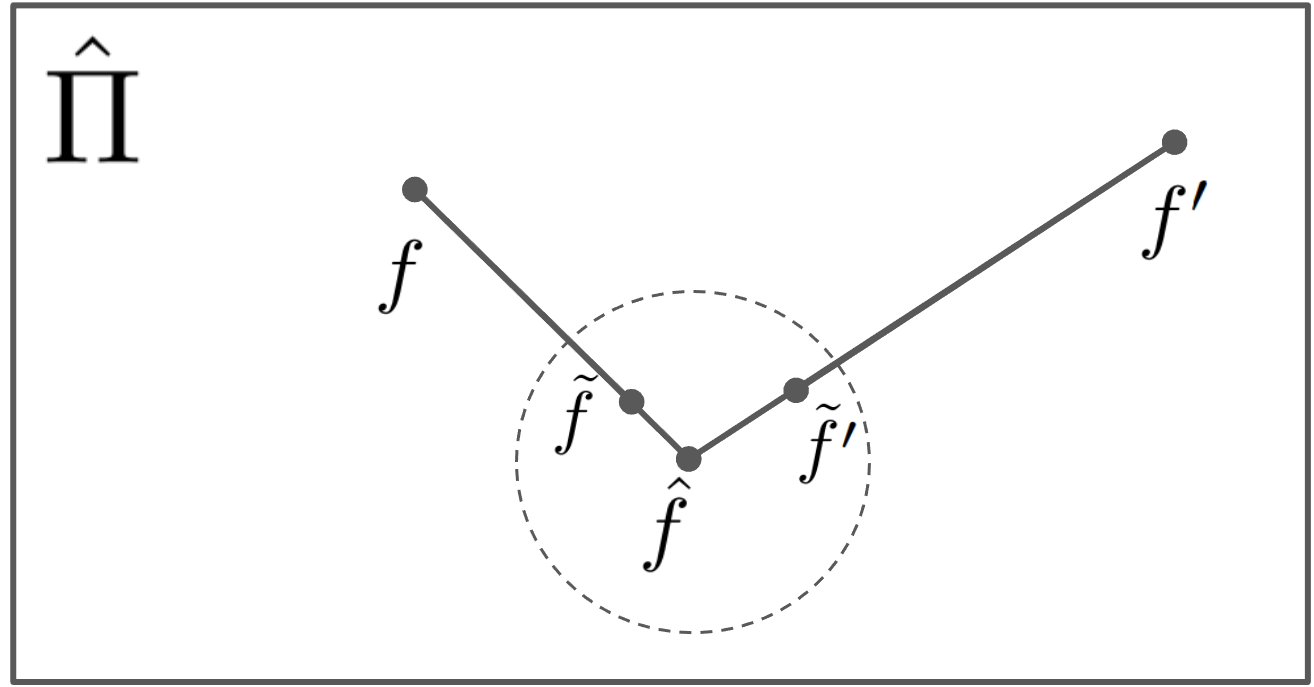}
\caption{Illustration of the various realizable feature counts used in the proof of Theorem 1.}
\label{fig:open_ball}
\end{figure*}

\begin{theorem} %
In any $MDP \setminus \mathcal{R}$, if $\hat{\Pi}$ contains an open set, then any pair of reward functions that are unhackable and non-trivial on $\hat{\Pi}$ are equivalent on $\hat{\Pi}$.
\end{theorem}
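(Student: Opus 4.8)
The plan is to exploit the linearity of $J_i$ in visit counts together with openness, in order to force the two reward vectors to be positively proportional along the relevant subspace; equivalence on all of $\hat\Pi$ is then immediate, with no separate extension step needed. First I would set up coordinates. Let $\dot\Pi \subseteq \hat\Pi$ be an open set of policies and write $U := \mathcal{F}(\dot\Pi)$. By the homeomorphism lemma (Lemma~\ref{lemma:homeomorphism}) together with Proposition~\ref{prop:image_dimension}, $U$ is open in the affine hull $\mathcal{A}$ of $\mathrm{Im}(\mathcal{F})$, whose direction (linear) space I denote $L$, with $\dim L = |S|(|A|-1)$. Since $J_i(\pi) = \mathcal{F}(\pi)\cdot\vec{\mathcal{R}_i}$, each $J_i$ restricts to an affine function on $\mathcal{A}$; let $g_i \in L$ be its gradient, i.e.\ the orthogonal projection of $\vec{\mathcal{R}_i}$ onto $L$, so that $J_i(\pi) - J_i(\pi') = g_i \cdot (\mathcal{F}(\pi) - \mathcal{F}(\pi'))$ for all $\pi,\pi'$ (this uses that $\mathcal{F}(\pi)-\mathcal{F}(\pi') \in L$). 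Non-triviality of $\mathcal{R}_i$ on $\hat\Pi$ supplies a pair with $J_i(\pi)\neq J_i(\pi')$, hence $g_i \neq 0$.

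The key step, which I would prove next, is that unhackability forces $g_2 = c\,g_1$ for some $c>0$. I would argue by contradiction: if this fails, then since $g_1,g_2\neq 0$ either $g_1,g_2$ are linearly independent, or $g_2 = c\,g_1$ with $c<0$. In either case there is a direction $v \in L$ with $g_1\cdot v > 0$ and $g_2 \cdot v < 0$ --- in the independent case because $v \mapsto (g_1\cdot v,\, g_2\cdot v)$ maps $L$ onto $\mathbb{R}^2$, and in the anti-parallel case by taking any $v$ with $g_1\cdot v > 0$ so that $g_2\cdot v = c\,(g_1\cdot v) < 0$. Fixing $x_0 \in U$ and using that $U$ is open in $\mathcal{A}$ with $v\in L$, the point $x_0 + \varepsilon v$ lies in $U$ for small $\varepsilon > 0$; by the homeomorphism these two feature counts are realized by policies $\pi,\pi' \in \dot\Pi$ with $J_1(\pi) < J_1(\pi')$ and $J_2(\pi) > J_2(\pi')$, contradicting unhackability.

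Finally, from $g_2 = c\,g_1$ with $c>0$ I obtain $J_2 = c\,J_1 + b$ on all of $\mathcal{A}$ for some constant $b$, hence on $\hat\Pi$ since $\mathcal{F}(\hat\Pi) \subseteq \mathcal{A}$; as $c>0$, the orderings induced by $J_1$ and $J_2$ coincide, so $\mathcal{R}_1$ and $\mathcal{R}_2$ are equivalent on $\hat\Pi$. I expect the main obstacle to be precisely the key step: the care lies in choosing the perturbation direction $v$ inside the correct ambient space $L$ (the direction space of the affine hull, not all of $\mathbb{R}^{|S||A|}$), in treating the independent and anti-parallel gradient cases uniformly, and in invoking the homeomorphism to guarantee that the perturbed feature count $x_0 + \varepsilon v$ corresponds to a genuine policy in $\dot\Pi$. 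It is worth emphasizing that openness is used only to produce this perturbation; once the gradients are pinned down, linearity propagates the conclusion to the whole affine hull automatically, which is why equivalence holds on all of $\hat\Pi$ rather than merely on $\dot\Pi$.
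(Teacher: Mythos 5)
Your proof is correct and follows essentially the same route as the paper's: both invoke the homeomorphism lemma to obtain an open set of visit counts inside the affine hull of $\mathrm{Im}(\mathcal{F})$, then use a perturbation argument in that open set to show unhackability forces the level sets of $J_1$ and $J_2$ to coincide, with linearity propagating the conclusion to all of $\hat{\Pi}$. Your version is somewhat streamlined --- working directly with the projected gradients $g_1, g_2$ lets you handle the sign of the proportionality constant explicitly (the paper leaves the exclusion of order reversal to a final appeal to unhackability) and lets you skip the paper's step of rescaling an arbitrary pair of visit-count differences into the open ball --- but the underlying mechanism is identical.
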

\begin{proof}

Let $\mathcal{R}_1$ and $\mathcal{R}_2$ be any two unhackable and non-trivial reward functions.
We will show that, for any $\pi, \pi' \in \hat{\Pi}$, we have $J_1(\pi) = J_1(\pi') \implies J_2(\pi) = J_2(\pi')$, and thus, by symmetry, $J_1(\pi) = J_1(\pi') \iff J_2(\pi) = J_2(\pi')$.
Since $\mathcal{R}_1$ and $\mathcal{R}_2$ are unhackable, this further means that they have exactly the same policy order, i.e.\ that they are equivalent. %

Choose two arbitrary $\pi, \pi' \in \hat{\Pi}$ with $J_1(\pi) = J_1(\pi')$ and let $f \doteq \mathcal{F}(\pi), f' \doteq \mathcal{F}(\pi')$.
The proof has 3 steps:%
\begin{enumerate}
    \item We find analogues for $f$ and $f'$, $\tilde{f}$ and $\tilde{f'}$, within the same open ball in $\mathcal{F}(\hat{\Pi})$. %
    \item We show that the tangent hyperplanes of $\vec{R}_1$ and $\vec{R}_2$ at $\tilde{f}$ must be equal to prevent neighbors of $\tilde{f}$ from making $\mathcal{R}_1$ and $\mathcal{R}_2$ hackable.
    \item We use linearity to show that this implies that $J_2(\pi) = J_2(\pi')$.
\end{enumerate}

\textbf{Step 1:}
By assumption, $\hat{\Pi}$ contains an open set $\dot{\Pi}$. Let $\hat{\pi}$ be some policy in $\dot{\Pi}$, and let $\hat{f} \doteq \mathcal{F}(\hat{\pi})$. Since $\dot{\Pi}$ is open, Lemma~\ref{lemma:homeomorphism} implies that $\mathcal{F}(\dot{\Pi})$ is open in $\mathbb{R}^{|S|(|A|-1)}$.
This means that, if $v, v'$ are the vectors such that $\hat{f} + v = f$ and $\hat{f} + v' = f'$, then there is a positive but sufficiently small $\delta$ such that $\tilde{f} \doteq \hat{f} + \delta v$ and $\tilde{f'} \doteq \hat{f} + \delta v'$ both are located in $\mathcal{F}(\dot{\Pi})$, see Figure~\ref{fig:open_ball}.
This further implies that there are policies $\tilde{\pi}, \tilde{\pi'} \in \dot{\Pi}$ such that $\mathcal{F}(\tilde{\pi}) = \tilde{f}$ and $\mathcal{F}(\tilde{\pi'}) = \tilde{f'}$.

\textbf{Step 2:}
Recall that $J(\pi) = \mathcal{F}(\pi) \cdot \vec{\mathcal{R}}$. %
Since $\mathcal{R}_1$ is non-trivial on $\hat{\Pi}$, it induces a \linebreak $(|S|(|A|-1) - 1)$-dimensional hyperplane tangent to $\vec{\mathcal{R}}_1$ corresponding to all points $x \in \mathbb{R}^{|S|(|A|-1)}$ such that $x \cdot \vec{\mathcal{R}}_1 = \tilde{f} \cdot \vec{\mathcal{R}}_1$, and similarly for $\mathcal{R}_2$.
Call these hyperplanes $H_1$ and $H_2$, respectively.
Note that $\tilde{f}$ is contained in both $H_1$ and $H_2$.

Next suppose $H_1 \neq H_2$.
Then, we would be able to find a point $f_{12} \in \mathcal{F}(\dot{\Pi})$, such that $f_{12} \cdot \vec{\mathcal{R}}_1 > \tilde{f} \cdot \vec{\mathcal{R}}_1$ but $f_{12} \cdot \vec{\mathcal{R}}_2 < \tilde{f} \cdot \vec{\mathcal{R}}_2$. This, in turn, means that there is a policy $\pi_{12} \in \dot{\Pi}$ such that $\mathcal{F}(\pi_{12}) = f_{12}$, and such that $J_1(\pi_{12}) > J_1(\tilde{\pi})$ but $J_2(\pi_{12}) < J_2(\tilde{\pi})$. Since $\mathcal{R}_1$ and $\mathcal{R}_2$ are unhackable, this is a contradiction. Thus $H_1 = H_2$.

\textbf{Step 3:}
Since $J_1(\pi) = J_1(\pi')$, we have that $f \cdot \vec{\mathcal{R}}_1 = f' \cdot \vec{\mathcal{R}}_1$. 
By linearity, this implies that $\tilde{f}  %
\cdot \vec{\mathcal{R}}_1 = \tilde{f}' \cdot \vec{\mathcal{R}}_1$; we can see this by expanding $\tilde{f} = \hat{f} + \delta v$
and $\tilde{f'} = \hat{f} + \delta v'$. 
This means that $\tilde{f}' \in H_1$. 
Now, since $H_1 = H_2$, this means that $\tilde{f}' \in H_2$, which in turn implies that $\tilde{f} \cdot \vec{\mathcal{R}}_2 = \tilde{f}' \cdot \vec{\mathcal{R}}_2$. By linearity, this then further implies that $f \cdot \vec{\mathcal{R}}_2 = f' \cdot \vec{\mathcal{R}}_2$, and hence that $J_2(\pi) = J_2(\pi')$. 
Since $\pi,\pi'$ were chosen arbitrarily, this means that $J_1(\pi) = J_1(\pi') \implies J_2(\pi) = J_2(\pi')$.
\end{proof}

\begin{corollary}
In any $MDP \setminus \mathcal{R}$, any pair of reward functions that are unhackable and non-trivial on the set of all (stationary) policies $\Pi$ are equivalent on $\Pi$.
\end{corollary}
\begin{proof}
This corollary follows from Theorem~\ref{thm:open-set}, if we note that the set of all policies does contain an open set. This includes, for example, the set of all policies in an $\epsilon$-ball around the policy that takes all actions with equal probability in each state.
\end{proof}

\begin{corollary}
In any $MDP \setminus \mathcal{R}$, any pair of reward functions that are unhackable and non-trivial on the set of all $\varepsilon$-suboptimal policies ($\varepsilon>0$) $\Pi^\varepsilon$ are equivalent on $\Pi^\varepsilon$, and any pair of reward functions that are unhackable and non-trivial on the set of all $\delta$-deterministic policies ($\delta<1$) $\Pi^\delta$ are equivalent on $\Pi^\delta$.
\end{corollary}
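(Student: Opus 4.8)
The plan is to reduce both claims to Theorem~\ref{thm:open-set} by showing that each of $\Pi^\varepsilon$ and $\Pi^\delta$ contains an open set, i.e.\ a set of policies whose image under the action-probability embedding $\mathcal{V}$ is relatively open in the affine hull of $\mathcal{V}(\Pi)$. Once that is established, Theorem~\ref{thm:open-set} applies verbatim to each set and delivers equivalence on it, which is exactly what the corollary asserts.

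For $\Pi^\delta$ I would exhibit an explicit open subset. Fix, for each state $s$, a designated action $a_s \in A$. Since $\delta < 1$ and $|A| > 1$, I can pick $\eta > 0$ with $\delta + \eta < 1$ and construct a policy $\pi_0$ with $\pi_0(a_s \mid s) = \delta + \eta$, spreading the remaining mass $1 - \delta - \eta > 0$ strictly positively over the other $|A|-1$ actions in each state. Then $\pi_0$ lies in the interior of the simplex product (all action probabilities positive) and satisfies $\pi_0(a_s \mid s) > \delta$ in every state. Both defining conditions---positivity of every probability and $\pi(a_s \mid s) > \delta$---are strict inequalities, hence jointly cut out a relatively open neighborhood of $\mathcal{V}(\pi_0)$ in the affine hull; every policy in this neighborhood is $\delta$-deterministic, yielding the required open subset of $\Pi^\delta$.

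For $\Pi^\varepsilon$ I would instead use continuity. Recalling that $J(\pi) = \mathcal{F}(\pi)\cdot\vec{\mathcal{R}}$ and that $\mathcal{F}$ is continuous, $J$ is a continuous function of the action probabilities, so the superlevel set $\{\,\pi : J(\pi) > J(\pi^\star) - \varepsilon\,\}$ is relatively open in the affine hull. It then suffices to show this set meets the interior of the simplex product, since its intersection with that interior is an open set contained in $\Pi^\varepsilon$. This is where $\varepsilon > 0$ is essential: even if $\pi^\star$ is deterministic and hence on the boundary, the inequality $J(\pi^\star) > J(\pi^\star) - \varepsilon$ is strict, so by continuity any sufficiently small perturbation of $\pi^\star$ into the interior still satisfies $J > J(\pi^\star) - \varepsilon$. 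Intersecting $\{J > J(\pi^\star)-\varepsilon\}$ with the (open) interior of the simplex product thus produces a nonempty open subset of $\Pi^\varepsilon$.

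The main obstacle is precisely this last step for $\Pi^\varepsilon$: one cannot simply take a ball around $\pi^\star$, because the optimal policy may lie on the boundary of policy space, where no relatively open ball fits inside $\Pi$. The resolution is to trade the margin $\varepsilon > 0$ for room to move into the interior, using continuity of $J$ to preserve $\varepsilon$-suboptimality along the way. With an open subset of each policy set in hand, a single invocation of Theorem~\ref{thm:open-set} settles both halves of the corollary.
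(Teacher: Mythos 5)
Your proposal is correct and follows essentially the same route as the paper: exhibit a strictly interior policy in each of $\Pi^\delta$ and $\Pi^\varepsilon$ (a smoothed near-deterministic policy, and a small interior perturbation of $\pi^\star$ whose value stays within the $\varepsilon$ margin by continuity), conclude each set contains an open subset, and invoke Theorem~\ref{thm:open-set}. Your treatment of the $\Pi^\varepsilon$ case is somewhat more explicit than the paper's, which simply says to ``apply an analogous argument,'' but the underlying idea is identical.
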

\begin{proof}
To prove this, we will establish that both $\Pi^\varepsilon$ and $\Pi^\delta$ contain open policy sets, and then apply Theorem~\ref{thm:open-set}.

Let us begin with $\Pi^\delta$. First, let $\pi$ be some deterministic policy, and let $\pi_\epsilon$ be the policy that in each state with probability $1-\epsilon$ takes the same action as $\pi$, and otherwise samples an action uniformly. Then if $\delta < \epsilon < 1$, $\pi_\epsilon$ is the center of an open ball in $\Pi^\delta$. Thus $\Pi^\delta$ contains an open set, and we can apply Theorem~\ref{thm:open-set}.

For $\Pi^\varepsilon$, let $\pi^\star$ be an optimal policy, and apply an analogous argument.
\end{proof}

\subsection{Finite Policy Sets}

\begin{theorem}%
For any $MDP \setminus \mathcal{R}$, any finite set of policies $\hat{\Pi}$ containing at least two $\pi,\pi'$ such that $\mathcal{F}(\pi) \neq \mathcal{F}(\pi')$, and any reward function $\mathcal{R}_1$, there is a non-trivial reward function $\mathcal{R}_2$ such that $\mathcal{R}_1$ and $\mathcal{R}_2$ are unhackable but not equivalent.
\end{theorem}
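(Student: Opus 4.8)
The plan is to follow the rotation idea sketched after the statement: start from $\mathcal{R}_1$, move in a straight line through reward-vector space toward a reward $\mathcal{R}_3$ that reverses one of $\mathcal{R}_1$'s strict comparisons, and stop at the first parameter value where some strict comparison degenerates into an equality. Throughout I work with the vectors $\vec{\mathcal{R}} \in \mathbb{R}^{|S||A|}$ and the identity $J_{\mathcal{R}}(\pi) = \langle \vec{\mathcal{R}}, \mathcal{F}(\pi)\rangle$, so that for a fixed pair $\pi,\pi'$ the gap $J_{\mathcal{R}}(\pi') - J_{\mathcal{R}}(\pi) = \langle \vec{\mathcal{R}}, \mathcal{F}(\pi')-\mathcal{F}(\pi)\rangle$ is linear in $\vec{\mathcal{R}}$. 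The case where $\mathcal{R}_1$ is trivial on $\hat{\Pi}$ is immediate: since $\hat{\Pi}$ has two policies with $\mathcal{F}(\pi) \neq \mathcal{F}(\pi')$, I can pick any $\vec{\mathcal{R}}_2$ with $\langle \vec{\mathcal{R}}_2, \mathcal{F}(\pi)-\mathcal{F}(\pi')\rangle \neq 0$; it is non-trivial, not equivalent to the trivial $\mathcal{R}_1$, and vacuously unhackable because $J_1$ has no strict comparisons. So I assume $\mathcal{R}_1$ is non-trivial from here on.

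First I would fix a pair $\pi_1,\pi_2 \in \hat{\Pi}$ with $J_1(\pi_1) < J_1(\pi_2)$, which exists by non-triviality and forces $d \doteq \mathcal{F}(\pi_1) - \mathcal{F}(\pi_2) \neq 0$. Taking $\vec{\mathcal{R}}_3 = \vec{\mathcal{R}}_1 + c\, d$ for large $c>0$ gives $J_3(\pi_1) - J_3(\pi_2) = (J_1(\pi_1)-J_1(\pi_2)) + c\|d\|^2 > 0$, so $\mathcal{R}_3$ reverses this comparison and is hackable with respect to $\mathcal{R}_1$. I then consider the segment $\vec{\mathcal{R}}_t = \vec{\mathcal{R}}_1 + t\,d$ for $t \geq 0$. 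For every pair $(\pi,\pi')$ with $J_1(\pi) < J_1(\pi')$ the gap $h_{\pi\pi'}(t) = \langle \vec{\mathcal{R}}_t, \mathcal{F}(\pi')-\mathcal{F}(\pi)\rangle$ is affine in $t$ and strictly positive at $t=0$; the gap for $(\pi_1,\pi_2)$ has slope $-\|d\|^2 < 0$ and so reaches $0$. Let $t^\star$ be the smallest $t>0$ at which any of these finitely many gaps vanishes, and set $\mathcal{R}_2 = \mathcal{R}_{t^\star}$.

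The two easy properties follow from minimality of $t^\star$. Unhackability: for all $t \leq t^\star$ every gap $h_{\pi\pi'}(t)$ is still $\geq 0$, so no strict comparison of $\mathcal{R}_1$ is reversed by $\mathcal{R}_2$, while pairs tied under $J_1$ impose no constraint; hence $\mathcal{R}_1,\mathcal{R}_2$ are unhackable. Not equivalent: by definition of $t^\star$ at least one pair with $J_1(\pi) < J_1(\pi')$ now satisfies $J_2(\pi) = J_2(\pi')$, a distinction $\mathcal{R}_1$ makes but $\mathcal{R}_2$ does not, so the induced orderings differ.

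The main obstacle is the remaining requirement that $\mathcal{R}_2$ be non-trivial. Writing $D = \mathrm{span}\{\mathcal{F}(\pi)-\mathcal{F}(\pi') : \pi,\pi' \in \hat{\Pi}\}$, the trivial reward vectors are exactly $T = D^{\perp}$, and triviality of $\mathcal{R}_{t^\star}$ would mean $\mathrm{proj}_D(\vec{\mathcal{R}}_1) + t^\star d = 0$, i.e. $\mathrm{proj}_D(\vec{\mathcal{R}}_1)$ parallel to the move direction $d \in D$ — the pathology being that the first collapse is in fact a simultaneous total collapse. When the visit-count vectors are not all collinear, so $\dim D \geq 2$, this is a non-generic coincidence I can remove by choosing the reversing direction more carefully: instead of moving along $d$ I move along some $u \in D$ that still gives $(\pi_1,\pi_2)$ a negative-slope gap (any small perturbation of $d$ within $D$ does) but is not parallel to the fixed nonzero vector $\mathrm{proj}_D(\vec{\mathcal{R}}_1)$. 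Since $u \in D$, the line $\vec{\mathcal{R}}_1 + t u$ meets $T$ iff $\mathrm{proj}_D(\vec{\mathcal{R}}_1) \in \mathbb{R}u$, which this choice forbids; hence $\mathcal{R}_{t^\star}$ is non-trivial. Equivalently, a generic $u$ makes the finitely many vanishing times distinct, so some strict comparison of $\mathcal{R}_1$ survives in $\mathcal{R}_2$. This is exactly the step where the ``width'' of $\hat{\Pi}$ in feature-count space must be invoked, and is where I expect the most care to be required.
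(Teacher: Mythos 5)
Your construction is essentially the paper's: the paper likewise walks a continuous path from $\vec{\mathcal{R}}_1$ to a reward that reverses one of its strict inequalities (it uses $-\vec{\mathcal{R}}_1$), takes the first point where the induced ordering changes, argues by continuity that the first change must be a strict inequality collapsing to an equality, and handles non-triviality by a dimension count showing the path can be chosen to avoid the set of trivial reward vectors. Your version is just a more explicit instance of this: a straight segment in a chosen direction, each pairwise gap an affine function of $t$, and $t^\star$ the first zero. Your unhackability and non-equivalence steps are correct as written.

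The one place you diverge is exactly the step you flag as delicate, and there your treatment is more careful than the paper's. The paper asserts that because $\mathcal{F}(\hat{\Pi})$ contains two linearly independent vectors, the trivial rewards form a subspace of dimension at most $|S||A|-2$ and hence cannot disconnect $\mathbb{R}^{|S||A|}$. That count is off by one: triviality means \emph{constant} (not zero) inner product with all visit counts, so the trivial set is $D^{\perp}$ for $D=\mathrm{span}\{\mathcal{F}(\pi)-\mathcal{F}(\pi')\}$, and two distinct visit-count vectors only guarantee $\dim D\ge 1$, in which case $D^{\perp}$ is a full hyperplane separating $\vec{\mathcal{R}}_1$ from $-\vec{\mathcal{R}}_1$. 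Your fix --- move along a direction $u\in D$ off the line $\mathbb{R}\cdot\mathrm{proj}_D(\vec{\mathcal{R}}_1)$, which exists whenever $\dim D\ge 2$ --- is correct and is the right way to make the argument rigorous. The residual case you leave open, $\dim D=1$ with $\mathcal{R}_1$ non-trivial, is not a gap you can close, because the statement fails there: take $\hat{\Pi}=\{\pi,\pi'\}$ with $\mathcal{F}(\pi)\ne\mathcal{F}(\pi')$ and $J_1(\pi)<J_1(\pi')$; then any $\mathcal{R}_2$ with $J_2(\pi)<J_2(\pi')$ is equivalent, with $J_2(\pi)=J_2(\pi')$ is trivial, and with $J_2(\pi)>J_2(\pi')$ is hackable. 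So your proof establishes the theorem precisely on its domain of validity (together with the trivial-$\mathcal{R}_1$ case); the hypothesis ought to require that $\mathcal{F}(\hat{\Pi})$ contains three linearly independent vectors, i.e.\ that the visit counts are not collinear.
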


\begin{proof}
If $\mathcal{R}_1$ is trivial, then simply choose any non-trivial $\mathcal{R}_2$.
Otherwise, the proof proceeds by finding a path from $\vec{\mathcal{R}}_1$ to $-\vec{\mathcal{R}}_1$, and showing that there must be an $\vec{\mathcal{R}}_2$ on this path such that $\mathcal{R}_2$ is non-trivial and unhackable with respect to $\mathcal{R}_1$, but not equivalent to $\mathcal{R}_1$.

The key technical difficulty is to show that there exists a continuous path from $\mathcal{R}_1$ to $-\mathcal{R}_1$ in $\mathbb{R}^{|S||A|}$ that does not include any trivial reward functions. 
Once we've established that, we can simply look for the first place where an inequality is reversed -- because of continuity, it first becomes an equality. 
We call the reward function at that point $\mathcal{R}_2$, and note that $\mathcal{R}_2$ is unhackable wrt $\mathcal{R}_1$ and not equivalent to $\mathcal{R}_1$. 
We now walk through the technical details of these steps.

First, note that $J(\pi) = \mathcal{F}(\pi)\cdot\vec{\mathcal{R}}$ is continuous in $\vec{\mathcal{R}}$. This means that if $J_1(\pi) > J_2(\pi')$ then there is a unique first vector $\vec{\mathcal{R}}_2$ on any path from $\vec{\mathcal{R}}_1$ to $-\vec{\mathcal{R}}_1$ such that $\mathcal{F}(\pi)\cdot\vec{\mathcal{R}}_2 \not> \mathcal{F}(\pi)\cdot\vec{\mathcal{R}}_2$, and for this vector we have that $\mathcal{F}(\pi)\cdot\vec{\mathcal{R}}_2 = \mathcal{F}(\pi)\cdot\vec{\mathcal{R}}_2$. Since $\hat{\Pi}$ is finite, and since $\mathcal{R}_1$ is not trivial, this means that on any path from $\vec{\mathcal{R}}_1$ to $-\vec{\mathcal{R}}_1$ there is a unique first vector $\vec{\mathcal{R}}_2$ such that $\mathcal{R}_2$ is not equivalent to $\mathcal{R}_1$, and then $\mathcal{R}_2$ must also be a unhackable with respect to $\mathcal{R}_1$.

It remains to show that there is a path from $\vec{\mathcal{R}}_1$ to $-\vec{\mathcal{R}}_1$ such that no vector along this path corresponds to a trivial reward function. 
Once we have such a path, the argument above implies that $\mathcal{R}_2$ must be a non-trivial reward function that is unhackable with respect to $\mathcal{R}_1$.
We do this using a dimensionality argument.
If $\mathcal{R}$ is trivial on $\hat{\Pi}$, then there is some $c \in \mathbb{R}$ such that $\mathcal{F}(\pi)\cdot\vec{\mathcal{R}} = c$ for all $\pi \in \hat{\Pi}$. This means that if $\mathcal{F}(\hat{\Pi})$ has at least $d$ linearly independent vectors, then the set of all such vectors $\vec{\mathcal{R}}$ forms a linear subspace with at most $|S||A| - d$ dimensions.
Now, since $\hat{\Pi}$ contains at least two $\pi,\pi'$ such that $\mathcal{F}(\pi) \neq \mathcal{F}(\pi')$, we have that $\mathcal{F}(\hat{\Pi})$ has at least $2$ linearly independent vectors, and hence that the set of all reward functions that are trivial on $\hat{\Pi}$ forms a linear subspace with at most $|S||A| - 2$ dimensions.
This means that there must exist a path from $\vec{\mathcal{R}}_1$ to $-\vec{\mathcal{R}}_1$ that avoids this subspace, 
since only a hyperplane (with dimension $|S||A| - 1$) can split $\mathbb{R}^{|S||A|}$ into two disconnected components.
\end{proof}

\begin{theorem}%
Let $\hat{\Pi}$ be a finite set of policies, and $\mathcal{R}$ a reward function. The following procedure determines if there exists a non-trivial simplification of $\mathcal{R}$ in a given $MDP \setminus \mathcal{R}$:
\begin{enumerate}
    \item Let $E_1 \dots E_m$ be the partition of $\hat{\Pi}$ where $\pi,\pi'$ belong to the same set iff $J(\pi) = J(\pi')$.
    \item For each such set $E_i$, select a policy $\pi_i \in E_i$ and let $Z_i$ be the set of vectors that is obtained by subtracting $\mathcal{F}(\pi_i)$ from each element of $\mathcal{F}(E_i)$.
\end{enumerate}
Then there is a non-trivial simplification of $\mathcal{R}$ iff $\mathrm{dim}(Z_1 \cup \dots \cup Z_m) \leq \mathrm{dim}(\mathcal{F}(\hat{\Pi})) - 2$, where $\mathrm{dim}(S)$ is the number of linearly independent vectors in $S$.
\end{theorem}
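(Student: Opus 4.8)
The plan is to reduce the existence of a non-trivial simplification to a question about reward vectors in a single linear subspace, and then adapt the path argument used for Theorem~\ref{thm:finite_unhackability}. Write $n = |S||A|$ and identify each reward with its vector $\vec{\mathcal{R}} \in \mathbb{R}^n$, so that $J(\pi) = \mathcal{F}(\pi)\cdot\vec{\mathcal{R}}$. The key observation is that the clause ``$J(\pi) = J(\pi') \implies J_2(\pi) = J_2(\pi')$'' in the definition of simplification says exactly that $\vec{\mathcal{R}}_2$ is orthogonal to every difference $\mathcal{F}(\pi) - \mathcal{F}(\pi')$ with $\pi,\pi'$ in a common block $E_i$. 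Since these differences span $\mathrm{span}(Z_1 \cup \dots \cup Z_m) =: Z$, the equality-preserving reward functions are precisely those in the linear subspace $Z^\perp$, which contains $\vec{\mathcal{R}}$ itself. A non-trivial simplification is therefore a non-trivial $\vec{\mathcal{R}}_2 \in Z^\perp$ whose induced ordering coarsens that of $\mathcal{R}$: it collapses at least one strict between-block inequality to an equality while reversing none.

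First I would construct a continuous path in $Z^\perp$ from $\vec{\mathcal{R}}$ to $-\vec{\mathcal{R}}$. Since $-\vec{\mathcal{R}}$ induces the reverse ordering, the class ordering must change somewhere along the path. As in Theorem~\ref{thm:finite_unhackability}, because $\mathcal{F}(\pi)\cdot\vec{\mathcal{R}}_2$ is continuous in $\vec{\mathcal{R}}_2$ and $\hat{\Pi}$ is finite, there is a first point on the path at which some strict inequality of $\mathcal{R}$ fails; by continuity it fails by first becoming an equality, and no other inequality has yet reversed. The reward $\vec{\mathcal{R}}_2$ at this first crossing thus coarsens $\mathcal{R}$ and collapses at least one distinction. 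Crucially, because the whole path lies in $Z^\perp$, every reward on it — including $\vec{\mathcal{R}}_2$ — automatically satisfies the within-block equality condition. This is exactly the extra requirement (absent from Theorem~\ref{thm:finite_unhackability}) that confining the path to $Z^\perp$ is designed to enforce. It then remains only to guarantee that the path avoids trivial rewards, so that $\vec{\mathcal{R}}_2$ is non-trivial.

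The main obstacle, and the origin of the dimension condition, is this last step. The trivial rewards are those giving equal value to all of $\hat{\Pi}$, i.e. the orthogonal complement $D^\perp$ of the full difference space $D := \mathrm{span}\{\mathcal{F}(\pi) - \mathcal{F}(\pi') : \pi,\pi' \in \hat{\Pi}\}$, with $\dim D = \dim(\mathcal{F}(\hat{\Pi}))$ (the dimension of the affine span of $\mathcal{F}(\hat{\Pi})$). Since $Z \subseteq D$, the trivial rewards $D^\perp$ form a subspace of $Z^\perp$ of codimension $\dim(\mathcal{F}(\hat{\Pi})) - \dim(Z)$. A subspace through the origin separates the antipodal points $\vec{\mathcal{R}}$ and $-\vec{\mathcal{R}}$ (whose connecting segment runs through the origin) exactly when it is a hyperplane, i.e. of codimension $1$; when its codimension is at least $2$ its complement stays path-connected, so a trivial-avoiding path exists. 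This yields the threshold $\dim(\mathcal{F}(\hat{\Pi})) - \dim(Z) \geq 2$, i.e. $\dim(Z) \leq \dim(\mathcal{F}(\hat{\Pi})) - 2$.

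For the converse I would show that when $\dim(Z) > \dim(\mathcal{F}(\hat{\Pi})) - 2$, no non-trivial simplification exists. Here $D^\perp$ has codimension at most $1$ in $Z^\perp$, so the quotient of $Z^\perp$ by the trivial rewards — equivalently, the space of block-value vectors modulo an additive constant — is at most one-dimensional; since $\mathcal{R}$ is non-trivial (else there is nothing to simplify), this quotient is spanned by the image of $\mathcal{R}$. Consequently every non-trivial equality-preserving reward assigns block values of the form $a\,\mathbf{1} + b\,(J(\pi_1),\dots,J(\pi_m))$, whose induced ordering is identical to that of $\mathcal{R}$ (if $b > 0$) or its exact reverse (if $b < 0$); neither collapses a strict inequality without collapsing all of them, so no non-trivial simplification can arise. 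Combining the two directions gives the equivalence. I expect the delicate points to be the topological separation fact for codimension-$1$ versus codimension-$\geq 2$ subspaces and the careful bookkeeping of $\dim(\mathcal{F}(\hat{\Pi}))$ as an affine dimension; the remainder closely mirrors the continuity argument of Theorem~\ref{thm:finite_unhackability}.
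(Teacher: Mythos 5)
Your proposal follows essentially the same route as the paper's proof: you characterize the equality-preserving rewards as a linear subspace (the orthogonal complement of $\mathrm{span}(Z_1\cup\dots\cup Z_m)$, which the paper calls the ``equality-preserving space''), run the continuity/path argument of Theorem~\ref{thm:finite_unhackability} from $\vec{\mathcal{R}}$ to $-\vec{\mathcal{R}}$ inside that subspace while avoiding the trivial rewards, and, for the converse, observe that when the equality-preserving rewards modulo the trivial ones form a space of dimension at most one, every candidate is a rescaling of $\mathcal{R}$ and hence induces the same ordering, its reverse, or the trivial ordering. Both directions are sound, and carrying out the argument with orthogonal complements in the ambient $\mathbb{R}^{|S||A|}$ rather than the paper's reduced $D$-dimensional representation is, if anything, cleaner.

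The one place where you and the stated theorem part ways is the bookkeeping you yourself flag as delicate: you set the dimension of the difference space $\mathrm{span}\{\mathcal{F}(\pi)-\mathcal{F}(\pi')\}$ equal to $\dim(\mathcal{F}(\hat{\Pi}))$ by reading the latter as an affine dimension, whereas the theorem defines $\dim(\mathcal{F}(\hat{\Pi}))$ as the number of linearly independent visit-count vectors. Since all visit counts lie on the affine hyperplane $\sum_{s,a}\mathcal{F}(\pi)[s,a]=1/(1-\gamma)$, which misses the origin, the difference space has dimension $\dim(\mathcal{F}(\hat{\Pi}))-1$ under the literal definition, and your (correct) criterion ``codimension of the trivial rewards inside the equality-preserving space is at least $2$'' then reads $\dim(Z)\le\dim(\mathcal{F}(\hat{\Pi}))-3$ rather than $-2$. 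This is not really a defect of your argument: the paper's own proof contains the same ambiguity (it asserts that a reward is trivial iff its $D$-dimensional representative is the zero vector, which holds only in the $(D-1)$-dimensional difference space), and the two-state example in the appendix --- where $\dim(\mathcal{F}(\hat{\Pi}))=3$ as a linear span, yet the ordering $\pi_{00}=\pi_{01}<\pi_{11}<\pi_{10}$ (with $\dim(Z)=1\le 3-2$) admits no non-trivial simplification --- indicates that your affine reading is the one under which the statement is actually true. So: same approach, correct core argument; just state explicitly which notion of dimension you are using, since the two readings differ by one and only one of them matches the theorem as written.
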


\begin{proof}
This proof uses a similar proof strategy as Theorem~\ref{thm:finite_unhackability}. However, in addition 
to avoiding trivial reward functions on the path from $\vec{\mathcal{R}}_1$ to $-\vec{\mathcal{R}}_1$, we must also ensure that we stay within the ``equality-preserving space'', to be defined below. 

First recall that $\mathcal{F}(\hat{\Pi})$ is a set of vectors in $\mathbb{R}^{|S||A|}$. If $\mathrm{dim}(\mathcal{F}(\hat{\Pi})) = D$ then these vectors are located in a $D$-dimensional linear subspace. Therefore, we will consider $\mathcal{F}(\hat{\Pi})$ to be a set of vectors in $\mathbb{R}^D$. Next, recall that any reward function $\mathcal{R}$ induces a linear function $L$ on $\mathbb{R}^D$, such that $J = L \circ \mathcal{F}$, and note that there is a $D$-dimensional vector $\vec{\mathcal{R}}$ that determines the \emph{ordering} that $\mathcal{R}$ induces over 
all points in 
$\mathbb{R}^D$. 
To determine the \emph{values} of $J$ on all points in $\mathbb{R}^D$ we would need a $(D+1)$-dimensional vector, but to determine the \emph{ordering}, we can ignore the height of the function. In other words, $L(x) = x \cdot \vec{\mathcal{R}} + L(\vec{0})$, for any $x \in \mathbb{R}^D$. Note that this is a different vector representation of reward functions than that which was used in Theorem~\ref{thm:finite_unhackability} and before.

Suppose $\mathcal{R}_2$ is a reward function such that if $J_1(\pi) = J_1(\pi')$ then $J_2(\pi) = J_2(\pi')$, for all $\pi,\pi' \in \hat{\Pi}$. This is equivalent to saying that $L_2(\mathcal{F}(\pi)) =  L_2(\mathcal{F}(\pi'))$ if $\pi,\pi' \in E_i$ for some $E_i$.
By the properties of linear functions, this implies that if $\mathcal{F}(E_i)$ contains $d_i$ linearly independent vectors then it specifies a $(d_i-1)$-dimensional affine space $S_i$ such that $L_2(x) = L_2(x')$ for all points $x,x' \in S_i$. Note that this is the smallest affine space which contains all points in $E_i$.
Moreover, $L_2$ is also constant for any affine space $\bar{S_i}$ \emph{parallel} to $S_i$. 
Formally, we say that $\bar{S_i}$ is parallel to $S_i$ if there is a vector $z$ such that for any $y \in \bar{S_i}$ there is an $x \in S_i$ such that $y = x + z$. From the properties of linear functions, if $L_2(x) = L_2(x')$ then $L_2(x + z) = L_2(x' + z)$.

Next, from the transitivity of equality, if we have two affine spaces $\bar{S_i}$ and $\bar{S_j}$, such that $L_2$ is constant over each of $\bar{S_i}$ and $\bar{S_j}$, and such that $\bar{S_i}$ and $\bar{S_j}$ \emph{intersect}, then $L_2$ is constant over all points in $\bar{S_i} \cup \bar{S_j}$. From the properties of linear functions, this then implies that $L_2$ is constant over all points in the smallest affine space $\bar{S_i}\otimes\bar{S_j}$ containing $\bar{S_i}$ and $\bar{S_j}$, given by combining the linearly independent vectors in $\bar{S_i}$ and $\bar{S_j}$.
Note that $\bar{S_i}\otimes\bar{S_j}$ has between $\mathrm{max}(d_i,d_j)$ and $(d_i + d_j - 1)$ dimensions.
In particular, since the affine spaces of $Z_1 \dots Z_m$ intersect (at the origin), and since $L_2$ is constant over these spaces, we have that $L_2$ must be constant for all points in the affine space $\mathcal{Z}$ 
which is the smallest affine space containing
$Z_1 \cup \dots \cup Z_m$. That is, if $\mathcal{R}_2$ is a reward function such that $J_1(\pi) = J_1(\pi') \implies J_2(\pi) = J_2(\pi')$ for all $\pi,\pi' \in \hat{\Pi}$, then $L_2$ is constant over $\mathcal{Z}$. Moreover, if $L_2$ is constant over $\mathcal{Z}$ then $L_2$ is also constant over each of $E_1 \dots E_m$, since each of $E_1 \dots E_m$ is parallel to $\mathcal{Z}$. This means that $\mathcal{R}_2$ satisfies that $J_1(\pi) = J_1(\pi') \implies J_2(\pi) = J_2(\pi')$ for all $\pi,\pi' \in \hat{\Pi}$ if and only if $L_2$ is constant over $\mathcal{Z}$.

If $\mathrm{dim}(\mathcal{Z}) = D'$ then there is a linear subspace with $D - D'$ dimensions, which contains the ($D$-dimensional) vector $\vec{\mathcal{R}}_2$ of any reward function $\mathcal{R}_2$ where $J_1(\pi) = J_1(\pi') \implies J_2(\pi) = J_2(\pi')$ for $\pi,\pi' \in \hat{\Pi}$. This is because $\mathcal{R}_2$ is constant over $\mathcal{Z}$ if and only if $\vec{R}_2 \cdot v = 0$ for all $v \in \mathcal{Z}$. Then if $\mathcal{Z}$ contains $D'$ linearly independent vectors $v_i \dots v_{D'}$, then the solutions to the corresponding system of linear equations form a $(D - D')$ dimensional subspace of $\mathbb{R}^D$.
Call this space the \emph{equality-preserving space}. Next, note that $\mathcal{R}_2$ is trivial on $\hat{\Pi}$ if and only if $\vec{\mathcal{R}}_2$ is the zero vector $\vec{0}$.

Now we show that if the conditions are not satisfied, then there is no non-trivial simplification.
Suppose $D' \geq D-1$, and that $\mathcal{R}_2$ is a simplification of $\mathcal{R}_1$. Note that if $\mathcal{R}_2$ simplifies $\mathcal{R}_1$ then $\vec{\mathcal{R}}_2$ is in the equality-preserving space.
Now, if $D' = D$ then $L_2$ (and $L_1$) must be constant for all points in $\mathbb{R}^D$, which implies that $\mathcal{R}_2$ (and $\mathcal{R}_1$) are trivial on $\hat{\Pi}$. Next, if $D' = D - 1$ then the equality-preserving space is one-dimensional.
Note that we can always preserve all equalities of $\mathcal{R}_1$ by \emph{scaling} $\mathcal{R}_1$ by a constant factor. That is, if $\mathcal{R}_2 = c \cdot \mathcal{R}_1$ for some (possibly negative) $c \in \mathbb{R}$ then $J_1(\pi) = J_1(\pi') \implies J_2(\pi) = J_2(\pi')$ for all $\pi,\pi' \in \hat{\Pi}$. This means that the parameter which corresponds to the dimension of the equality-preserving space in this case must be the scaling of $\vec{\mathcal{R}}_2$. However, the only simplification of $\mathcal{R}_1$ that is obtainable by uniform scaling is the trivial simplification. This means that if $D' \geq D-1$ then $\mathcal{R}_1$ has no non-trivial simplifications on $\hat{\Pi}$.

For the other direction, suppose $D' \leq D-2$. Note that this implies that $\mathcal{R}_1$ is not trivial.
Let $\mathcal{R}_3 = -\mathcal{R}_1$. Now both $\vec{\mathcal{R}}_1$ and $\vec{\mathcal{R}}_3$ are located in the equality-preserving space. Next, since the equality-preserving space has at least two dimensions, this means that there is a continuous path from $\vec{\mathcal{R}}_1$ to $\vec{\mathcal{R}}_3$ through the equality-preserving space that does not pass the origin.
Now, note that $J_i(\pi) = \mathcal{F}(\pi)\cdot\vec{\mathcal{R}}_i$ is continuous in $\vec{\mathcal{R}}_i$.
This means that there, on the path from $\vec{\mathcal{R}}_1$ to $\vec{\mathcal{R}}_3$ is a first vector $\vec{\mathcal{R}}_2$ such that $\mathcal{F}(\pi)\cdot\vec{\mathcal{R}}_2 = \mathcal{F}(\pi')\cdot\vec{\mathcal{R}}_2$ but $\mathcal{F}(\pi)\cdot\vec{\mathcal{R}}_1 \neq \mathcal{F}(\pi')\cdot\vec{\mathcal{R}}_1$ for some $\pi,\pi' \in \hat{\Pi}$. 
Let $\mathcal{R}_2$ be a reward function corresponding to $\vec{\mathcal{R}_2}$. Since $\vec{\mathcal{R}_2}$ is not $\vec{0}$, we have that $\mathcal{R}_2$ is not trivial on $\hat{\Pi}$. Moreover, since $\vec{\mathcal{R}_2}$ is in the equality-preserving space, and since $\mathcal{F}(\pi)\cdot\vec{\mathcal{R}}_2 = \mathcal{F}(\pi')\cdot\vec{\mathcal{R}}_2$ but $\mathcal{F}(\pi)\cdot\vec{\mathcal{R}}_1 \neq \mathcal{F}(\pi')\cdot\vec{\mathcal{R}}_1$ for some $\pi,\pi' \in \hat{\Pi}$, we have that $\mathcal{R}_2$ is a non-trivial simplification of $\mathcal{R}_1$. Therefore, if $D' \leq D-2$ then there exists a non-trivial simplification of $\mathcal{R}_1$.

We have thus proven both directions, which completes the proof.
\end{proof}

\begin{corollary}
For any finite set of policies $\hat{\Pi}$, any environment, and any reward function $\mathcal{R}$, if $|\hat{\Pi}| \geq 2$ and $J(\pi) \neq J(\pi')$ for all $\pi,\pi' \in \hat{\Pi}$, then there is a non-trivial simplification of $\mathcal{R}$.
\end{corollary}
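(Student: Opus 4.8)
The plan is to derive this corollary as a direct consequence of Theorem~\ref{thm:finite_simplification}. First I would observe that the hypothesis $J(\pi) \neq J(\pi')$ for all distinct $\pi,\pi' \in \hat{\Pi}$ means the partition $E_1,\dots,E_m$ produced in step~1 of the procedure consists entirely of singletons: two policies lie in the same block iff they share a value, and here no two distinct policies do, so $m = |\hat{\Pi}|$ and each $E_i = \{\pi_i\}$. Consequently, when step~2 subtracts $\mathcal{F}(\pi_i)$ from each element of $\mathcal{F}(E_i)$, each resulting set is $Z_i = \{\vec{0}\}$, whence $Z_1 \cup \dots \cup Z_m = \{\vec{0}\}$ and $\mathrm{dim}(Z_1 \cup \dots \cup Z_m) = 0$.

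It then remains to check the numerical criterion of the theorem, namely $0 = \mathrm{dim}(Z_1 \cup \dots \cup Z_m) \leq \mathrm{dim}(\mathcal{F}(\hat{\Pi})) - 2$, which reduces to showing $\mathrm{dim}(\mathcal{F}(\hat{\Pi})) \geq 2$. Since $|\hat{\Pi}| \geq 2$ and all values are distinct, there exist two policies $\pi,\pi'$ with $J(\pi) \neq J(\pi')$, and these must satisfy $\mathcal{F}(\pi) \neq \mathcal{F}(\pi')$ (equal visit counts would force $J(\pi) = \mathcal{F}(\pi)\cdot\vec{\mathcal{R}} = \mathcal{F}(\pi')\cdot\vec{\mathcal{R}} = J(\pi')$). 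I would then upgrade this distinctness to linear independence using Proposition~\ref{prop:image_dimension}: every visit-count vector has the same coordinate sum $\sum_{s,a}\mathcal{F}(\pi)[s,a] = \tfrac{1}{1-\gamma} \neq 0$. If one such vector were a scalar multiple $c$ of another, matching coordinate sums would force $c = 1$ and hence equality of the two vectors; therefore two distinct visit-count vectors are automatically linearly independent, giving $\mathrm{dim}(\mathcal{F}(\hat{\Pi})) \geq 2$.

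Finally, with both quantities in hand, I would invoke Theorem~\ref{thm:finite_simplification}: since $\mathrm{dim}(Z_1 \cup \dots \cup Z_m) = 0 \leq \mathrm{dim}(\mathcal{F}(\hat{\Pi})) - 2$, the procedure certifies the existence of a non-trivial simplification of $\mathcal{R}$, completing the argument.

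The main obstacle is the single non-routine step of promoting distinctness of the two visit-count vectors to linear independence; naively, distinct vectors can be linearly dependent, and the argument only works because the fixed, nonzero $\ell_1$-structure of $\mathrm{Im}(\mathcal{F})$ (Proposition~\ref{prop:image_dimension}) rules out scalar multiples. Everything else is a mechanical unwinding of the definitions in the procedure of Theorem~\ref{thm:finite_simplification}, exploiting that singleton blocks contribute only the zero vector to $Z_1 \cup \dots \cup Z_m$.
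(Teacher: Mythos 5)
Your proof is correct and follows essentially the same route as the paper's: the singleton equivalence classes give $\mathrm{dim}(Z_1\cup\dots\cup Z_m)=0$, and $\mathrm{dim}(\mathcal{F}(\hat{\Pi}))\geq 2$ then lets you invoke Theorem~\ref{thm:finite_simplification}. The one place you go beyond the paper is in promoting $\mathcal{F}(\pi)\neq\mathcal{F}(\pi')$ to linear independence via the fixed coordinate sum $\tfrac{1}{1-\gamma}$; the paper asserts that $\mathcal{F}(\pi)\neq\mathcal{F}(\pi')$ implies $\mathrm{dim}(\mathcal{F}(\hat{\Pi}))\geq 2$ without justification, so your argument actually closes a small gap in its proof.
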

\begin{proof}
Note that if $E_i$ is a singleton set then $Z_i = \{\vec{0}\}$. Hence, if each $E_i$ is a singleton set then $\mathrm{dim}(Z_1 \cup \dots \cup Z_m)$ = 0. If $\hat{\Pi}$ contains at least two $\pi,\pi'$, and $J(\pi) \neq J(\pi')$, then  $\mathcal{F}(\pi) \neq \mathcal{F}(\pi')$. This means that $\mathrm{dim}(\mathcal{F}(\hat{\Pi})) \geq 2$. Thus the conditions of Theorem~\ref{thm:finite_simplification} are satisfied.
\end{proof}

\section{Any Policy Can Be Made Optimal}\label{sec:any_policy_optimal}

In this section, we show that any policy is optimal under some reward function.

\begin{proposition}
For any rewardless MDP $(S,A,T,I,\underline{\hspace*{0.3cm}},\gamma)$ and any policy $\pi$, there exists a reward function $\mathcal{R}$ such that $\pi$ is optimal in the corresponding MDP $(S,A,T,I,\mathcal{R},\gamma)$.
\end{proposition}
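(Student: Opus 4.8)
The plan is to exhibit an explicit bounded, Markovian reward function that assigns the maximal possible per-step reward exactly to the actions in the support of $\pi$, so that $\pi$ attains the global upper bound on return. Concretely, I would define
$$
\mathcal{R}(s,a) \doteq \mathbbm{1}(\pi(a \mid s) > 0),
$$
which depends only on the current state and action (hence is Markovian) and takes values in $\{0,1\}$. This is the natural stationary, Markovian analogue of the construction of \citet{leike2018scalable}: instead of rewarding a single prescribed action history, we reward \emph{every} action that $\pi$ assigns positive probability to in the current state.

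The argument then splits into two short steps. First, since $\pi$ only ever selects actions $a$ with $\pi(a \mid s) > 0$, every trajectory $\tau \sim \pi$ receives reward $1$ at every timestep, so $G(\tau) = \sum_{t=0}^\infty \gamma^t = \frac{1}{1-\gamma}$ and therefore $J(\pi) = \frac{1}{1-\gamma}$ (taking $\gamma < 1$ so that returns are finite; the boundary case $\gamma = 1$ carries the usual convergence caveats and can be set aside). Second, because $\mathcal{R}$ is $\{0,1\}$-valued, no trajectory of any policy $\pi'$ can exceed per-step reward $1$, giving $G(\tau) \leq \frac{1}{1-\gamma}$ for all $\tau$ and hence $J(\pi') \leq \frac{1}{1-\gamma} = J(\pi)$. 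Thus $\pi$ attains the maximum value over the set of all policies, which is exactly the definition of optimality. This can also be seen directly in the visit-count formulation: $J(\pi) = \sum_{s,a} \mathbbm{1}(\pi(a \mid s) > 0)\, \mathcal{F}^\pi(s,a) = \sum_{s,a} \mathcal{F}^\pi(s,a) = \frac{1}{1-\gamma}$, since $\mathcal{F}^\pi(s,a) > 0$ only when $\pi(a \mid s) > 0$.

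I do not expect a genuine obstacle here; the only conceptual point worth flagging is the treatment of \emph{stochastic} policies, which is precisely why one should reward the whole support rather than a single action as in the naive deterministic recipe $\mathcal{R}(s,a) = \mathbbm{1}(a = \pi(s))$. For a stochastic $\pi$ there is no unique correct action, and standard MDP theory guarantees a deterministic optimal policy, so a stochastic $\pi$ can be optimal only by tying with others. Rewarding the entire support resolves this cleanly: it makes every action in the support simultaneously value-maximizing, so $\pi$ (and indeed any policy supported on the same action sets) is optimal. For the same reason, the proposition can only assert optimality rather than \emph{unique} optimality when $\pi$ is stochastic. The remaining work is merely verifying that the support indicator is both Markovian and tight, which the two steps above accomplish.
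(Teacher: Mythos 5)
Your proposal is correct and is essentially the paper's own construction: the paper sets $\mathcal{R}(s,a,s') = 0$ for $a \in \mathrm{Support}(\pi(s))$ and $-1$ otherwise, which differs from your support-indicator reward only by a constant shift and hence yields the same argument (every on-support action attains the maximal per-step reward, so $\pi$ achieves the global upper bound on return). Your additional remarks about rewarding the whole support to handle stochastic $\pi$, and about optimality being non-unique, match the paper's discussion following its proof.
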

\begin{proof}
Let $\mathcal{R}(s,a,s') = 0$ if $a \in \mathrm{Support}(\pi(s))$, and $-1$ otherwise.
\end{proof}

This shows that any policy is rationalised by some reward function in any environment.
Any policy that gives $0$ probability to any action which $\pi$ takes with $0$ probability is optimal under this construction. This means that if $\pi$ is deterministic, then it will be the only optimal policy in $(S,A,T,I,\mathcal{R},\gamma)$.

\section{Examples}
\label{sec:examples}

In this section, we take a closer look at two previously-seen examples: the two-state $MDP\setminus \mathcal{R}$ and the cleaning robot.

\subsection{Two-state $MDP\setminus \mathcal{R}$ example}
Let us explore in more detail the two-state system introduced in the main text.
We decsribe this infinite-horizon $MDP \setminus \mathcal{R}$ in Table \ref{tab:two_state}.

\begin{table}[h!]
\centering
\renewcommand{\arraystretch}{1.2}
\begin{tabular}{| l  l |}
\hline
 States & $S = \{0, 1\}$  \\ 
 \hline
 Actions & $A = \{0, 1\}$  \\  
 \hline
 Dynamics & $T(s, a) = a$ for $s\in S, a\in A$   \\
 \hline
 Initial state distribution & $\text{Pr}(\text{start in } s) = 0.5$ for $s \in S$ \\
 \hline
 Discount factor & $\gamma = 0.5$ \\
 \hline
\end{tabular}
\vspace{2mm}
\caption{The two-state $MDP\setminus \mathcal{R}$ in consideration.}
\label{tab:two_state}
\end{table}

We denote $\pi_{ij}$ ($i,j \in \{0, 1\}$) the policy
which takes action $i$ when in state 0 and action $j$
when in state 1. This gives us four possible deterministic policies:
\begin{equation*}
    \{ \pi_{00}, \pi_{01}, \pi_{10}, \pi_{11} \}.
\end{equation*}
 
There are $4! = 24$ ways of
ordering these policies with strict inequalities.
Arbitrarily setting 
$\pi_{00} < \pi_{11}$ breaks a symmetry and reduces
the number of policy orderings to 12. 
When a policy ordering can be derived from some reward function $\mathcal{R}$, we say that $\mathcal{R}$ \textbf{represents} it, and that the policy ordering is \textbf{representable}.
Of these 12 policy orderings with strict inequalities, six are representable:
\begin{align*}
   & \pi_{00} < \pi_{01} < \pi_{10} < \pi_{11}, \\ %
   & \pi_{00} < \pi_{01} < \pi_{11} < \pi_{10}, \\ %
   & \pi_{00} < \pi_{10} < \pi_{01} < \pi_{11}, \\ %
   & \pi_{01} < \pi_{00} < \pi_{11} < \pi_{10}, \\ %
   & \pi_{10} < \pi_{00} < \pi_{01} < \pi_{11}, \\ %
   & \pi_{10} < \pi_{00} < \pi_{11} < \pi_{01}. %
\end{align*}

Simplification in this environment is nontrivial
-- given a policy ordering, it is not obvious which strict inequalities can be set to equalities such that there is a reward function which represents the new ordering. Through a computational approach (see Section~\ref{sec:software}) we find the following representable orderings, each of which is a simplification of one of the above strict orderings.
\begin{align*}
    & \pi_{00} = \pi_{01} < \pi_{11} < \pi_{10}, \\ %
    & \pi_{00} = \pi_{10} < \pi_{01} < \pi_{11}, \\ %
    & \pi_{00} < \pi_{01} = \pi_{10} < \pi_{11}, \\ %
    & \pi_{01} < \pi_{00} = \pi_{11} < \pi_{10}, \\ %
    & \pi_{10} < \pi_{00} = \pi_{11} < \pi_{01}, \\ %
    & \pi_{00} < \pi_{01} < \pi_{10} = \pi_{11}, \\ %
    & \pi_{10} < \pi_{00} < \pi_{01} = \pi_{11}, \\ %
    & \pi_{00} = \pi_{01} = \pi_{10} = \pi_{11}.
\end{align*}

Furthermore, for this environment, we find that any reward function which sets the value of three policies equal necessarily forces the value of the fourth policy to be equal as well.

\subsection{Cleaning robot example}

Recall the cleaning robot example in which a robot can choose to clean a combination of three rooms, and receives a nonnegative reward for each room cleaned. This setting can be thought of as a single-step eight-armed bandit with special reward structure.

\subsubsection{Hackability}

We begin our exploration of this environment with a statement regarding exactly when two policies are hackable. In fact, the proposition is slightly more general, extending to an arbitrary (finite) number of rooms.
 
\label{app:cleaning_robot}
\begin{proposition}
    Consider a cleaning robot which can clean 
$N$ different rooms, and identify each room with 
a unique index in \{1, \ldots, N\}. Cleaning
room $i$ gives reward $r(i) \geq 0$. Cleaning multiple
rooms gives reward equal to the
sum of the rewards of the rooms cleaned.
The value
of a policy $\pi_S$ which cleans 
a collection of rooms
$S$
is the sum of the rewards corresponding
to the rooms cleaned:
$J(\pi_S) = \sum_{i \in S} r(i)$.
For room $i$,
the true reward function assigns a value
$r_\text{true}(i)$, while the
proxy reward function assigns it reward $r_\text{proxy}(i)$. 
The proxy reward
is hackable with respect to the true reward if and only if
there are two sets of rooms $S_1, S_2$ such that 
$\sum_{i \in S_1} r_\text{proxy}(i) < \sum_{i \in S_2} r_\text{proxy}(i)$
and 
$\sum_{i \in S_1} r_\text{true}(i) > \sum_{i \in S_2} r_\text{true}(i)$.
\label{prop:cleaning}
\end{proposition}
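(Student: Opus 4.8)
The plan is to observe that this proposition is essentially a transcription of Definition~\ref{def:unhackable} into the language of the cleaning-robot bandit, so the proof reduces to carefully matching up two existential statements rather than to any substantive argument. First I would fix the policy set under consideration: each deterministic policy is determined by the collection $S \subseteq \{1, \ldots, N\}$ of rooms it cleans, so the relevant policy set is $\Pi = \{\pi_S : S \subseteq \{1,\ldots,N\}\}$, in bijection with the power set of the rooms. Under a reward $r$, the assumed additive structure gives $J_r(\pi_S) = \sum_{i \in S} r(i)$, so the value of a policy is literally the sum of the rewards of the rooms it cleans. This dictionary is the only ingredient that uses the special structure of the environment.

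With the dictionary in place, I would simply unfold Definition~\ref{def:unhackable}. Taking $\mathcal{R}_1 = r_\text{proxy}$ and $\mathcal{R}_2 = r_\text{true}$, the proxy and true rewards are hackable iff there exist policies $\pi, \pi' \in \Pi$ with $J_\text{proxy}(\pi) < J_\text{proxy}(\pi')$ and $J_\text{true}(\pi) > J_\text{true}(\pi')$. Writing $\pi = \pi_{S_1}$ and $\pi' = \pi_{S_2}$ and substituting the additive value function turns these two inequalities into exactly $\sum_{i \in S_1} r_\text{proxy}(i) < \sum_{i \in S_2} r_\text{proxy}(i)$ and $\sum_{i \in S_1} r_\text{true}(i) > \sum_{i \in S_2} r_\text{true}(i)$, which is the claimed condition. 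Both directions then follow at once, since the definition and the claim are the same existential statement: the ``only if'' direction reads off $S_1, S_2$ from the witnessing policies, and the ``if'' direction builds the witnessing policies $\pi_{S_1}, \pi_{S_2}$ from $S_1, S_2$.

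The one point requiring care, rather than a genuine obstacle, is bookkeeping of the inequality directions together with the symmetry of hackability. Because unhackability (and hence hackability) is symmetric in its two arguments, the labelling of which reward plays the role of $\mathcal{R}_1$ is immaterial, and I would state this explicitly so that the asymmetric phrasing ``proxy hackable with respect to true'' is unambiguous and the orientation of the two strict inequalities is pinned down correctly. I would also remark that the non-negativity assumption $r(i) \geq 0$ plays no role in the equivalence itself and so could be dropped. Finally, I would note that the proposition quantifies only over the deterministic policies $\pi_S$; were the policy set enlarged to include stochastic mixtures, the ``if'' direction would be unchanged, but the ``only if'' direction would additionally require extracting deterministic witnesses from a mixed pair, which lies outside the scope of the statement as given.
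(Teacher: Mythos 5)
Your proposal is correct and matches the paper's own proof: both directions amount to translating between the witnessing pair of policies $\pi_{S_1}, \pi_{S_2}$ and the sets $S_1, S_2$ via the additive identity $J(\pi_S) = \sum_{i \in S} r(i)$, which is exactly how the appendix argues the two implications. Your additional remarks on symmetry, the irrelevance of non-negativity, and the restriction to deterministic policies are accurate but not needed.
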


\begin{proof}
    We show the two directions of the double implication.
    \begin{itemize}
        \item[$\Leftarrow$]
        Suppose there are two sets of rooms $S_1, S_2$ satisfying
        $\sum_{i \in S_1} r_\text{proxy}(i) < \sum_{i \in S_2} r_\text{proxy}(i)$
        and 
        $\sum_{i \in S_1} r_\text{true}(i) > \sum_{i \in S_2} r_\text{true}(i)$.
        The policies
        $\pi_{S_i} = \text{\enquote{clean exactly the rooms in $S_i$}}$
        for $i \in \{1, 2\}$ demonstrate that $r_\text{proxy}, r_\text{true}$ are hackable.
        To see this, remember that $J(\pi_S) = \sum_{i \in S} r(i)$.
        Combining this with the premise immediately gives
        $J_\text{proxy}(\pi_{S_1}) < J_\text{proxy}(\pi_{S_2})$
        and $J_\text{true}(\pi_{S_1}) > J_\text{true}(\pi_{S_2})$.
        
        \item[$\Rightarrow$]
        If $r_\text{proxy}, r_\text{true}$ are hackable, then there must be a pair of policies $\pi_1, \pi_2$ such that $J_\text{proxy}(\pi_1) < J_\text{proxy}(\pi_2)$
        and $J_\text{true}(\pi_1) > J_\text{true}(\pi_2)$. Let $S_1$ be the set
        of rooms cleaned by $\pi_1$ and $S_2$ be the set
        of rooms cleaned by $\pi_2$. 
        Again remembering that
        $J(\pi_S) = \sum_{i \in S} r(i)$
        immediately gives us that
        $\sum_{i \in S_1} r_\text{proxy}(i) < \sum_{i \in S_2} r_\text{proxy}(i)$
        and 
        $\sum_{i \in S_1} r_\text{true}(i) > \sum_{i \in S_2} r_\text{true}(i)$.
    \end{itemize}
\end{proof}

In the main text, we saw two intuitive ways of modifying the reward function in the cleaning robot example: omitting information and overlooking fine details.
Unfortunately, there is no obvious mapping of Proposition~\ref{prop:cleaning} onto simple rules concerning how to safely omit information or overlook fine details: it seems that one must resort to ensuring that no two sets of rooms satisfy the conditions for hackability described in the proposition.

\subsubsection{Simplification}

We now consider simplification in this environment. Since we know the reward for cleaning each room is nonnegative, there will be some structure underneath all the possible orderings over the policies. This structure is shown in Figure \ref{fig:cleaning_policy_ordering}: regardless of the value assigned to each room, a policy at the tail of an arrow can only be at most as good as a policy at the head of the arrow.

\tikzstyle{arrow} = [thick,->,>=stealth]

\begin{figure}[H]
\centering
\begin{tikzpicture}[node distance=2.5cm]
\node (1) [] {[0, 0, 0]};
\node (2) [ right of=1, yshift=0.7cm, xshift=0.65cm] {[0, 0, 1]};
\node (3) [ right of=1, yshift=0cm, xshift=0.65cm] {[0, 1, 0]};
\node (4) [ right of=1, yshift=-0.7cm, xshift=0.65cm] {[1, 0, 0]};
\node (5) [ right of=2, xshift=0.65cm] {[0, 1, 1]};
\node (6) [ right of=3, xshift=0.65cm] {[1, 0, 1]};
\node (7) [ right of=4, xshift=0.65cm] {[1, 1, 0]};
\node (8) [ right of=6, xshift=0.65cm] {[1, 1, 1]};

\draw [arrow] (1) -- (2);
\draw [arrow] (1) -- (3);
\draw [arrow] (1) -- (4);
\draw [arrow] (2) -- (5);
\draw [arrow] (2) -- (6);
\draw [arrow] (3) -- (5);
\draw [arrow] (3) -- (7);
\draw [arrow] (4) -- (6);
\draw [arrow] (4) -- (7);
\draw [arrow] (5) -- (8);
\draw [arrow] (6) -- (8);
\draw [arrow] (7) -- (8);

\end{tikzpicture}
\caption{The structure underlying all possible policy orderings (assuming nonnegative room value). The policy
at the tail of the arrow is at most as good as the policy at
the head of the arrow.}
\label{fig:cleaning_policy_ordering}
\end{figure}
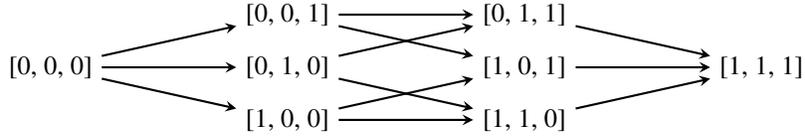

If we decide to simplify an ordering by equating
two policies connected by an arrow, the structure of the reward calculation will force other policies to also be equated.
Specifically,
if the equated policies differ only in position $i$, then all pairs of policies
which differ only in position $i$ will
also be set equal.

For example, imagine we simplify the reward by saying we don't care if the attic is cleaned or not, so long as the other two rooms are cleaned (recall that we named the rooms Attic, Bedroom and Kitchen). This amounts to saying that $J([0, 1, 1]) = J([1, 1, 1])$. 
Because the policy value function is of the form
\begin{equation*}
    J(\pi) = J([x, y, z]) = [x, y, z] \cdot [r_1, r_2, r_3]
\end{equation*}
where $x, y, z \in \{0, 1\}$, this simplification forces $r_1 = 0$. In turn, this implies that
$J([0, 0, 0]) = J([1, 0, 0])$ and $J([0, 1, 0]) = J([1, 1, 0])$. The new structure underlying the ordering over policies is shown in Figure \ref{fig:simplified_cleaning_policy_ordering}.

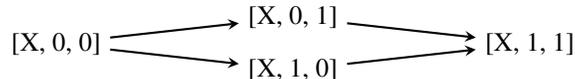
\begin{figure}[H]
\centering
\begin{tikzpicture}[node distance=2.5cm]
\node (1) [] {[X, 0, 0]};
\node (2) [ right of=1, yshift=0.35cm, xshift=0.65cm] {[X, 0, 1]};
\node (3) [ right of=1, yshift=-0.35cm, xshift=0.65cm] {[X, 1, 0]};
\node (4) [ right of=3, yshift=0.35cm, xshift=0.65cm] {[X, 1, 1]};

\draw [arrow] (1) -- (2);
\draw [arrow] (1) -- (3);
\draw [arrow] (2) -- (4);
\draw [arrow] (3) -- (4);

\end{tikzpicture}
\caption{The updated ordering structure after equating \enquote{clean all
the rooms} and \enquote{clean all the rooms except the attic}.
X can take either value in \{0, 1\}.
}
\label{fig:simplified_cleaning_policy_ordering}
\end{figure}

An alternative way to think about simplification in this problem is by imagining policies as corners of a cube, and simplification as flattening of the cube along one dimension -- 
simplification collapses this cube into a square.

\subsection{Software repository}
\label{sec:software}

The software suite described in the paper (and used to calculate the representable policy orderings and simplifications of the two-state $MDP \setminus \mathcal{R}$) can be found at \url{https://github.com/nikihowe/reward-hacking-paper}.

\section{Unhackability Diagram}
\label{sec:hackability_diagram}

Consider a setting with three policies $a, b, c$. We allow all possible orderings of the policies.
In general, these orderings might not all be representable; a concrete case in which they are is when $a, b, c$ represent different deterministic policies in a 3-armed bandit.

We can represent all unhackable pairs of policy orderings with an undirected graph, which we call an \textbf{unhackability diagram}. This includes a node for every representable ordering and edges connecting orderings which are unhackable.
Figure~\ref{fig:unhackability_diagram} shows an unhackability diagram including all possible orderings of the three policies $a, b, c$.

\begin{figure*}[ht]
\centering
\includegraphics[width=.9\textwidth]{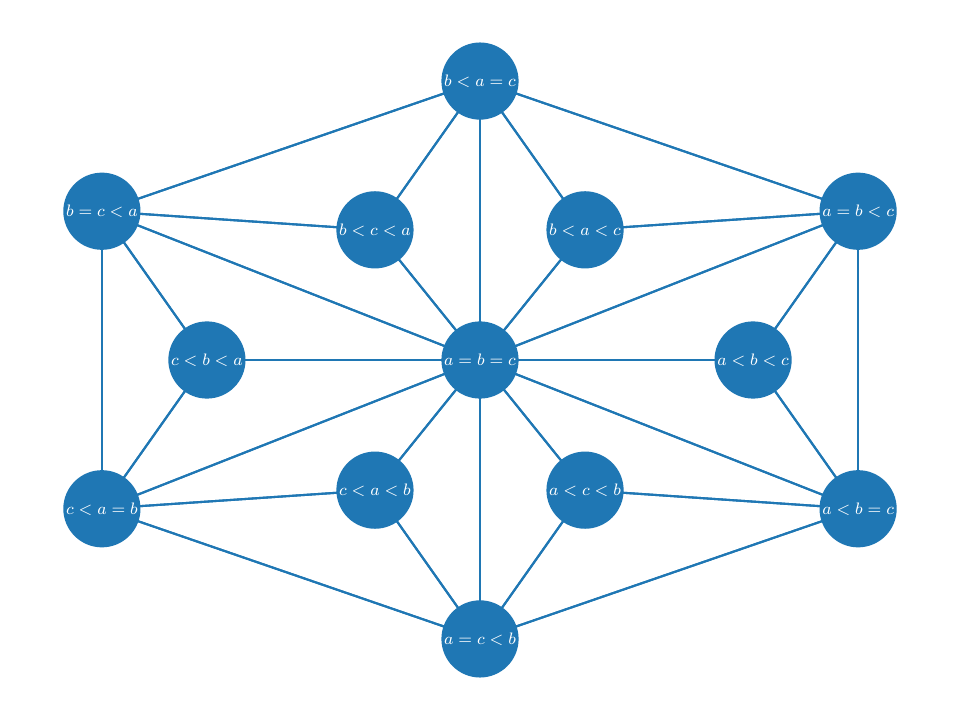}
\caption{Illustration of the unhackable pairs of policy orderings when considering all possible orderings over three policies $a, b, c$. Edges of the graph connect unhackable policy orderings.
}
\label{fig:unhackability_diagram}
\end{figure*}
\newpage

\section{Simplification Diagram}
\label{sec:simplification_diagram}

We can also represent all possible simplifications using a directed graph, which we call a \textbf{simplification diagram}.
This includes a node for every representable ordering and edges pointing from orderings to their simplifications.
Figure~\ref{fig:simplification_diagram} presents a simplification diagram including all possible orderings of three policies $a, b, c$.

\begin{figure*}[ht]
\centering
\includegraphics[width=.9\textwidth]{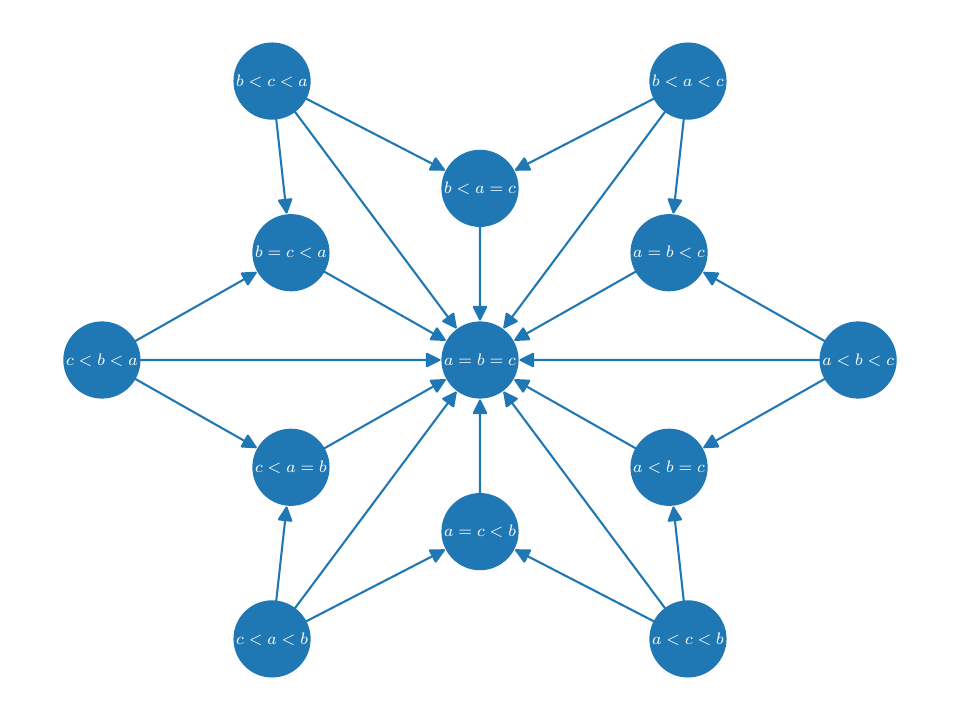}
\caption{Illustration of the simplifications present when considering all possible orderings over three policies $a, b, c$. Arrows represent simplification: the policy ordering at the head of an arrow is a simplification of the policy ordering at the tail of the arrow.}
\label{fig:simplification_diagram}
\end{figure*}

We note that the simplification graph is a subgraph of the unhackability graph.
This will always be the case, since simplification can never lead to reward hacking.

\end{document}